\definecolor{bluegray}{rgb}{0.4, 0.6, 0.8}
\definecolor{electriclime}{rgb}{0.8, 1.0, 0.0}
\definecolor{malachite}{rgb}{0.04, 0.85, 0.32}
\definecolor{darkred}{rgb}{0.55, 0.0, 0.0}
\definecolor{darkblue}{rgb}{0.0, 0.0, 0.55}
\definecolor{darkgreen}{rgb}{0.0, 0.2, 0.13}
\definecolor{darkorchid}{rgb}{0.6, 0.2, 0.8}
\newcommand{\tikzcircle}[2][red,fill=red]{\tikz[baseline=-0.7ex]\draw[#1,radius=#2] (0,0) circle ;}%
\definecolor{gold}{RGB}{221, 196, 65}
\definecolor{silver}{RGB}{215, 215, 215}
\definecolor{bronze}{RGB}{126, 66, 5}
\newcommand{\cheeg}{\mathsf{h}_{\mathsf{Cheeg}
}}
\newcommand{\res}{\mathsf{Res}}
\newcommand{\V}{\mathsf{V}}
\newcommand{\E}{\mathsf{E}}
\newcommand{\R}{\mathbb{R}}
\newcommand{\Hi}{\mathbf{H}}
\newcommand{\up}{\sigma}
\newcommand{\rs}{\mathsf{r}}
\newcommand{\mpas}{\mathsf{a}}
\newcommand{\agg}{\mathsf{agg}}
\newcommand{\com}{\mathsf{com}}
\newcommand{\gph}{\mathsf{G}} 
\newcommand{\MPNN}{\mathsf{MPNN}}
\newcommand{\rew}{\mathcal{R}}
\newcommand{\eigen}{\boldsymbol{\psi}}
\newcommand{\W}{\mathbf{W}}
\newcommand{\DELta}{\boldsymbol{\Delta}}
\newcommand{\OMEga}{\boldsymbol{\Omega}}
\newcommand{\Anorm}{\boldsymbol{\mathsf{A}}}
\newcommand{\tel}{\mathsf{MPNN}_{\mathsf{tel}}}
\newcommand{\poly}{\mathsf{p}}
\newcommand{\ourname}{\text{telescopic}-\mathsf{MPNN}}
\newcommand{\oper}{\boldsymbol{\mathsf{S}}_{\rs,\mpas}}
\newcommand{\obst}{\mathsf{O}}
\theoremstyle{plain}
\newtheorem{theorem}{Theorem}[section]
\newtheorem{proposition}[theorem]{Proposition}
\newtheorem{corollary}[theorem]{Corollary}
\theoremstyle{definition}
\newtheorem{definition}[theorem]{Definition}
\newtheorem{assumption}[theorem]{Assumption}
\theoremstyle{remark}
\icmltitlerunning{On over-squashing in $\MPNN$s: The Impact of Width, Depth, and Topology}
\begin{document}

\twocolumn[
\icmltitle{On Over-Squashing in Message Passing Neural Networks: \\ The Impact of Width, Depth, and Topology}




\begin{icmlauthorlist}
\icmlauthor{Francesco Di Giovanni}{cam}
\icmlauthor{Lorenzo Giusti}{sap}
\icmlauthor{Federico Barbero}{oxf}
\icmlauthor{Giulia Luise}{msr}
\icmlauthor{Pietro Li\`{o}}{cam}
\icmlauthor{Michael Bronstein}{oxf}
\end{icmlauthorlist}

\icmlaffiliation{cam}{University of Cambridge}
\icmlaffiliation{sap}{Sapienza University}
\icmlaffiliation{oxf}{University of Oxford}
\icmlaffiliation{msr}{Microsoft Research}

\icmlcorrespondingauthor{Francesco Di Giovanni}{fd405@cam.ac.uk}

\icmlkeywords{Machine Learning, ICML}

\vskip 0.3in
]



\printAffiliationsAndNotice{} 

\begin{abstract}
Message Passing Neural Networks (MPNNs) are instances of Graph Neural Networks that leverage the graph to send messages over the edges. This inductive bias leads to a phenomenon known as over-squashing, where a node feature is insensitive to information contained at distant nodes. Despite recent methods introduced to mitigate this issue, an understanding of the causes for over-squashing and of possible solutions are lacking. In this theoretical work, we prove that: (i) Neural network width can mitigate over-squashing, but at the cost of making the whole network more sensitive; (ii) Conversely, depth cannot help mitigate over-squashing: increasing the number of layers leads to over-squashing being dominated by vanishing gradients; (iii) The graph topology plays the greatest role, since over-squashing occurs between nodes at high commute time. Our analysis provides a unified framework to study different recent methods introduced to cope with over-squashing and serves as a justification for a class of methods that fall under graph rewiring.
\end{abstract}

\section{Introduction}\label{sec:introduction}
Learning on graphs with Graph Neural Networks (GNNs) \citep{sperduti1994encoding, goller1996learning,gori2005new, scarselli2008graph, bruna2013spectral,defferrard2016convolutional} has become an increasingly flourishing area of machine learning. 
Typically, GNNs operate in the \emph{message-passing paradigm} by exchanging information between nearby nodes \citep{gilmer2017neural}, giving rise to the class of 
Message-Passing Neural Networks ($\MPNN$s). While message-passing has demonstrated to be a useful inductive bias, it has also been shown that the paradigm has some fundamental flaws, from expressivity \citep{xu2018how, morris2019weisfeiler}, to over-smoothing \citep{nt2019revisiting,cai2020note, bodnar2022neural, rusch2022graph,di2022graph, zhao2022analysis} and over-squashing. The first two limitations have been thoroughly investigated, however {\em less is known about over-squashing}. 

\citet{alon2020bottleneck} described over-squashing as an issue emerging when $\MPNN$s propagate messages across distant nodes, with the exponential expansion of the receptive field of a node leading to many messages being `squashed' into fixed-size vectors. \citet{topping2021understanding} formally justified this phenomenon via a sensitivity analysis on the Jacobian of node features and, partly, linked it to the existence of edges with high-negative curvature. However, some important {\bf questions are left open} from the analysis in \citet{topping2021understanding}: (i) What is the impact of {\em width} in mitigating over-squashing? (ii) Can over-squashing be avoided by sufficiently {\em deep} models? (iii) How does over-squashing relate to the graph-spectrum and the underlying {\em topology} beyond curvature bounds that only apply to 2-hop propagation? The last point is particularly relevant due to recent works trying to combat over-squashing via methods that depend on the graph spectrum \citep{arnaiz2022diffwire,deac2022expander,karhadkar2022fosr}. However, it is yet to be clarified if and why these works alleviate over-squashing.

In this work, we aim to address all the questions that are left open in \citet{topping2021understanding} to provide a better theoretical understanding on the causes of over-squashing as well as on what can and cannot fix it. 


\paragraph{Contributions and outline.} An $\MPNN$ is generally constituted by two main parts: a choice of architecture, and an underlying graph over which it operates. In this work, we investigate how these factors participate in the over-squashing phenomenon. We focus on the width and depth of the $\MPNN$, as well as on the graph-topology. 

\begin{itemize}
    \item In \Cref{sec:width}, we prove that the \emph{width} can mitigate over-squashing (\Cref{cor:bound_MLP_MPNN}), albeit at the potential cost of 
    generalization. We also verify this with experiments.  
    \item In \Cref{sec:depth}, we show that depth may not be able to alleviate over-squashing. We identify two regimes. In the first one, the number of layers is comparable to the graph diameter, and we prove that over-squashing is likely to occur among distant nodes (\Cref{cor:over-squasing_distance}). In fact, the distance at which over-squashing happens is strongly dependent on 
    the graph topology -- as we validate experimentally. In the second regime, we consider an arbitrary (large) number of layers. We prove that at this stage the $\MPNN$ is, generally, 
    dominated by vanishing gradients (\Cref{thm:vanishing}). This result is of independent interest, since it characterizes analytically conditions of vanishing gradients of the loss for a large class of $\MPNN$s that also include residual connections.
    \item In \Cref{sec:topology} we show that the \emph{topology} of the graph has the greatest impact on over-squashing. In fact, we show that over-squashing happens among nodes with high commute time (\Cref{thm:effective_resistance}) and we validate this empirically. This provides a unified framework to explain why all spatial and spectral {\em rewiring} approaches (discussed in \Cref{sec:related_work}) do mitigate over-squashing.
\end{itemize}

\section{Background and related work}\label{sec:preliminaries}

\subsection{The message-passing paradigm}
Let $\gph$ be a graph with nodes $\V$ and edges $\E$. The connectivity is encoded in the adjacency matrix $\mathbf{A}\in\R^{n\times n}$, with $n$ the number of nodes. We assume that $\gph$ is undirected, connected, and that there are features $\{\mathbf{h}_{v}^{(0)}\}_{v\in \V}\subset \R^{p}$. 
Graph Neural Networks (GNNs) are functions of the form $\mathsf{GNN}_{\theta}:(\gph,\{\mathbf{h}_v^{(0)}\}) \mapsto y_\gph$, with parameters $\theta$ estimated via training and whose output $y_\gph$ is either a node-level or graph-level prediction. The most studied class of GNNs, known as the Message Passing Neural Network ($\MPNN$) \citep{gilmer2017neural}, compute node representations 
by stacking $m$ layers of the form: 
\begin{align*}
    \mathbf{h}_{v}^{(t)} = \mathsf{com}^{(t)}(\mathbf{h}_{v}^{(t-1)},\mathsf{agg}^{(t)}(\{\mathbf{h}_{u}^{(t-1)}: (v,u)\in \E\})),
\end{align*}
\noindent for $t = 1,\hdots, m$, where  $\mathsf{agg}^{(t)}$ is some {\em aggregation} function invariant to node permutation, while $\com^{(t)}$ {\em combines} the node's current state with messages from its neighbours. In this work, we usually assume $\agg$ to be of the form
\begin{equation}\label{eq:message-passing-operator}
    \agg^{(t)}(\{\mathbf{h}_{u}^{(t-1)}: (v,u)\in \E\}) = \sum_{u}\Anorm_{vu}\mathbf{h}^{(t-1)}_{u},
\end{equation}
\noindent where $\Anorm\in\R^{n\times n}$ is a {\bf Graph Shift Operator (GSO)}, meaning that $\Anorm_{vu} \neq 0$ if and only if $(v,u)\in \mathsf{E}$. Typically, $\Anorm$ is a (normalized) adjacency matrix that we also refer to as message-passing matrix.
\noindent 
While instances of $\MPNN$ differ based on the choices of $\Anorm$ and $\mathsf{com}$, they all aggregate messages over the neighbours, such that in a layer, only nodes connected via an edge exchange messages.
This presents two advantages: $\MPNN$s (i) can capture graph-induced `short-range' dependencies well, 
and (ii) are efficient, since they can leverage the sparsity of the 
graph. 
Nonetheless, $\MPNN$s have been shown to suffer from a few drawbacks, including {\em limited expressive power} and {\em over-squashing}. The problem of expressive power 
stems from the equivalence of $\MPNN$s to the Weisfeiler-Leman graph isomorphism test 
\citep{xu2018how, morris2019weisfeiler}. This framework has been studied extensively 
\citep{jegelka2022theory}.  
On the other hand, the phenomenon of over-squashing, which is the main focus of this work, is more elusive and less understood. We review what is currently known about it in the next subsection. 


\subsection{The problem: introducing over-squashing} Since in an $\MPNN$ the information is aggregated over the neighbours, for a node $v$ to be affected by features at distance $r$, an $\MPNN$ needs at least $r$ layers \citep{barcelo2019logical}. It has been observed though 
that due to the expansion of the receptive field of a node, $\MPNN$s may end up sending a number of messages growing exponentially with the distance $r$, 
leading to a potential loss of information known as {\em over-squashing} \citep{alon2020bottleneck}. 
\citet{topping2021understanding} showed that for an $\MPNN$ with message-passing matrix $\Anorm$ as in Eq.~\eqref{eq:message-passing-operator} and {\em scalar} features, given nodes $v,u$ at distance $r$, we have $\lvert \partial h_{v}^{(r)} / \partial h_{u}^{(0)}\rvert \leq c\cdot (\Anorm^{r})_{vu},$ 
\noindent with $c$ a constant depending on the Lipschitz regularity of the model. If 
$(\Anorm^{r})_{vu}$ decays exponentially with $r$, 
then the feature of $v$ is insensitive to the information contained at $u$. Moreover, \citet{topping2021understanding} showed that over-squashing is related to the existence of edges with {\em high negative curvature}. Such characterization though only applies to propagation of information up to 2 hops. 
\begin{figure}[t]
    \centering
    \includegraphics[width=0.48\textwidth]{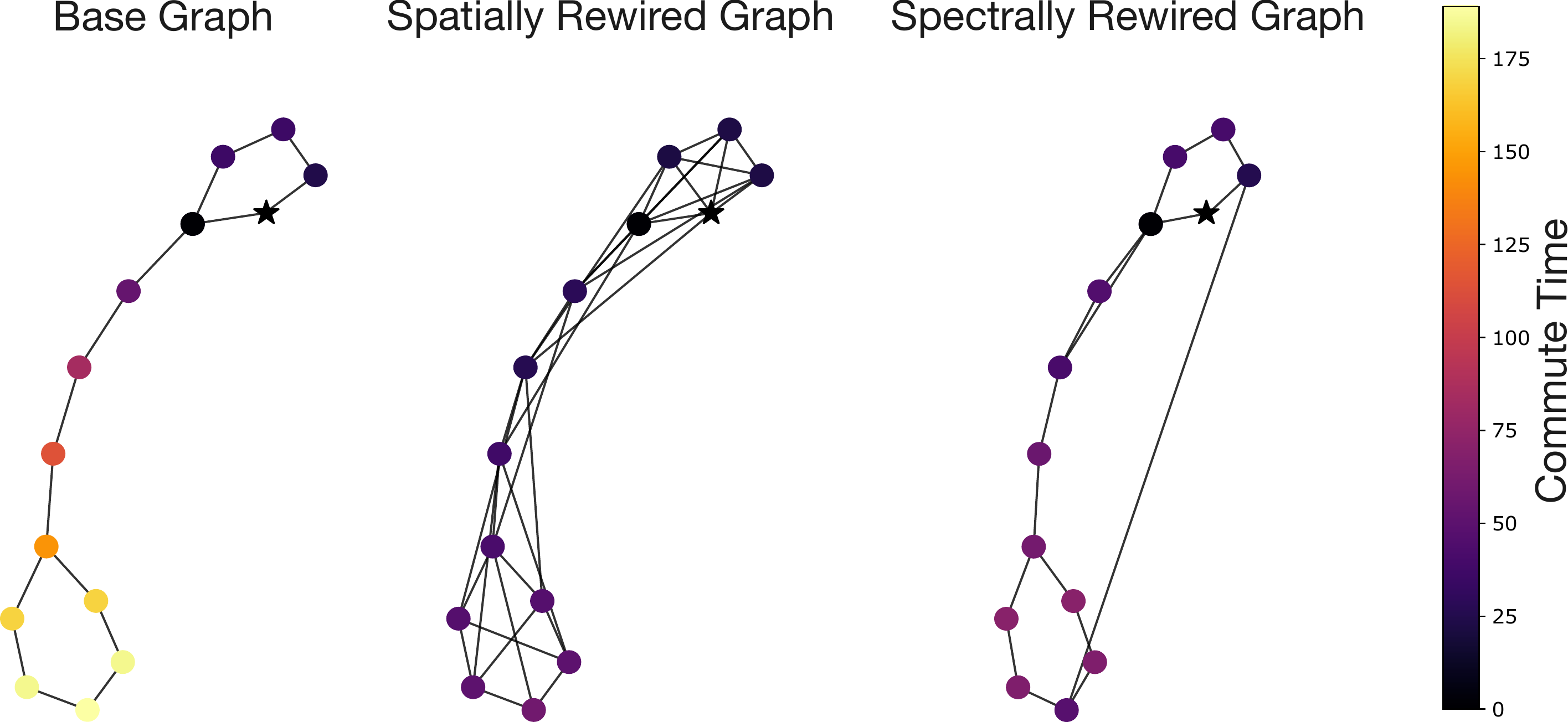}
    \vspace{-1.9mm}
    \caption{Effect of different rewirings $\mathcal{R}$ on the graph connectivity. 
    The colouring 
    denotes Commute Time -- defined in \Cref{sec:topology} -- w.r.t. to the star node. 
    From left to right, the graphs shown are: the base, spatially rewired and spectrally rewired. The added edges significantly reduce the Commute Time and hence mitigate over-squashing in light of \Cref{thm:effective_resistance}. }
    \vspace{-3mm}\label{fig:effective-resistance}
\end{figure}
 
\subsection{Related work}\label{sec:related_work}
Multiple solutions to mitigate over-squashing have already been proposed. We classify them below; in \Cref{sec:topology}, we provide a  unified framework that encompasses all such solutions. We first introduce the following notion:

\begin{definition}\label{def:rewiring}
Consider an $\MPNN$, a graph $\gph$ with adjacency $\mathbf{A}$, and a map $\mathcal{R}:\R^{n\times n}\rightarrow \R^{n\times n}$. We say that $\gph$ has been {\bf rewired} by $\mathcal{R}$, if the messages are exchanged on $\mathcal{R}(\gph)$ instead of $\gph$, with $\mathcal{R}(\gph)$ the graph with adjacency $\mathcal{R}(\mathbf{A})$. 
\end{definition}
\noindent Recent approaches to combat over-squashing share a common idea: replace the graph $\gph$ with a rewired graph $\mathcal{R}(\gph)$ enjoying better connectivity \Cref{fig:effective-resistance}. We then distinguish these works based on the choice of the rewiring $\mathcal{R}$. 

\paragraph{Spatial methods.} Since $\MPNN$s fail to propagate information to distant nodes, a solution consists in replacing $\gph$ with $\rew(\gph)$ such that $\mathrm{diam}(\rew(\gph)) \ll \mathrm{diam}(\gph)$. 
Typically, this is achieved by either explicitly adding edges (possibly attributed) 
between distant nodes 
\citep{bruel2022rewiring,abboud2022shortest,gutteridge2023drew} or by allowing distant nodes to communicate through higher-order structures (e.g., cellular or simplicial complexes, \citep{bodnar2021weisfeilercell,bodnar2021weisfeiler}, which requires additional domain knowledge 
and incurs a computational overhead). 
{\em Graph-Transformers} 
can be seen as an extreme example of rewiring, 
where $\rew(\gph)$ is a {\em complete graph} with edges weighted via attention \citep{kreuzer2021rethinking, mialon2021graphit, ying2021transformers,rampavsek2022recipe}. While these methods do alleviate over-squashing, since they {\em bring all pair of nodes closer}, they come at the expense of making the graph $\rew(\gph)$ much denser. In turn, this has an impact on computational complexity and introduces the risk of mixing local and non-local interactions. 

We include in this group \citet{topping2021understanding} and \citet{banerjee2022oversquashing}, 
where the rewiring is {\em surgical} -- but requires specific pre-processing -- in the sense that 
$\gph$ is replaced by $\rew(\gph)$ where edges have only been added to `mitigate' bottlenecks as identified, for example, by negative curvature \citep{ollivier2007ricci, di2022heterogeneous}. 

We finally mention that spatial rewiring, intended as accessing information beyond the 1-hop when updating node features, is common to many existing frameworks 
\cite{abu2019mixhop,klicpera2019diffusion, chen2020supervised, ma2020path, wang2020multi, nikolentzos2020k}. However, 
this is usually done via powers of the adjacency matrix, which is the main culprit for over-squashing \citep{topping2021understanding}. 
Accordingly, although the diffusion operators $\Anorm^{k}$ allow to aggregate information over non-local hops, they are not suited to mitigate over-squashing. 

\paragraph{Spectral methods.} 
The connectedness of a graph $\gph$ can be measured via a quantity known as the {\em Cheeger constant}, defined as follows \citep{chung1997spectral}:

\begin{definition}\label{def:cheeger}
For a graph $\gph$, the Cheeger constant is
\begin{equation*}
    \cheeg = \min_{\mathsf{U}\subset \V}\frac{\lvert \{(u,v)\in\E: u\in \mathsf{U}, v\in \V\setminus \mathsf{U}\}\rvert}{\min(\mathrm{vol}(\mathsf{U}),\mathrm{vol}(\V\setminus \mathsf{U}))},
\end{equation*}
\noindent where $\mathrm{vol}(\mathsf{U}) = \sum_{u\in\mathsf{U}}d_u$, with $d_u$ the degree of node $u$.
\end{definition}
\noindent The Cheeger constant $\cheeg$ represents the energy required to disconnect $\gph$ into two communities. A small $\cheeg$ means that $\gph$ generally has two communities separated by only few edges -- over-squashing is then expected to occur here {\em if} information needs to travel from one community to the other. While $\cheeg$ is generally intractable to compute, thanks to the Cheeger inequality we know that $\cheeg \sim \lambda_1$, where $\lambda_1$ is 
the positive, smallest eigenvalue of the graph Laplacian. Accordingly, a few new approaches have suggested to choose a rewiring that depends on the spectrum of $\gph$ and yields a new graph satisfying $\cheeg(\rew(\gph)) > \cheeg(\gph)$.
This strategy includes \citet{arnaiz2022diffwire,deac2022expander, karhadkar2022fosr}. 
It is claimed that sending messages over such a graph $\rew(\gph)$ 
alleviates over-squashing, however this has not been shown analytically yet. 

\paragraph{The goal of this work.} The analysis of \citet{topping2021understanding}, which represents our current theoretical understanding of the over-squashing problem, 
leaves some important open questions which we address in this work: 
(i) We study the role of the {\bf width} in mitigating over-squashing; (ii) We analyse what happens when the {\bf depth} exceeds the distance among two nodes of interest; (iii) We prove how over-squashing is related to the graph structure (beyond local curvature-bounds) and its {\bf spectrum}. As a consequence of (iii), we  {\em provide a unified framework to explain how spatial and spectral approaches alleviate over-squashing}. We reference here the concurrent work of \citet{black2023understanding}, who,  similarly to us, drew a strong connection between over-squashing and Effective Resistance (see \Cref{sec:topology}).




\paragraph{Notations and conventions to improve readability.} In the following, to prioritize readability we often leave sharper bounds with more optimal and explicit terms to the Appendix. From now on, $p$ always denotes the width (hidden dimension) while $m$ is the depth (i.e. number of layers). The feature of node $v$ at layer $t$ is written as $\mathbf{h}_{v}^{(t)}$. Finally, we write $[\ell] = \{1,\ldots,\ell\}$ for any integer $\ell$. All proofs can be found in the Appendix.


\section{The impact of width}\label{sec:width}
In this Section we assess whether the width of the underlying $\MPNN$ can mitigate over-squashing and the extent to which this is possible. In order to do that, we extend the sensitivity analysis in \citet{topping2021understanding} to higher-dimensional node features. We consider a class of $\MPNN$s parameterised by neural networks, of the form: 
\begin{equation}\label{eq:MPNN_mlp}
    \mathbf{h}_{v}^{(t+1)} = \up\Big(c_{\rs}\W_{\rs}^{(t)} \mathbf{h}_{v}^{(t)} + c_{\mpas}\W_{\mpas}^{(t)}\sum_{u}\Anorm_{vu}\mathbf{h}^{(t)}_{u}\Big),
\end{equation}
\noindent where $\sigma$ is a pointwise-nonlinearity, $\W_{\rs}^{(t)},\W_{\mpas}^{(t)}\in\R^{p\times p}$ are learnable weight matrices and $\Anorm$ is a graph shift operator. Note that Eq.~\eqref{eq:MPNN_mlp} includes common $\MPNN$s such as $\mathsf{GCN}$ \citep{kipf2016semi}, $\mathsf{SAGE}$ \citep{hamilton2017inductive}, and $\mathsf{GIN}$ \citep{xu2018how}, where $\Anorm$ is one of $\mathbf{D}^{-1/2}\mathbf{A}\mathbf{D}^{-1/2}$, $\mathbf{D}^{-1}\mathbf{A}$ and $\mathbf{A}$, respectively, with $\mathbf{D}$ the diagonal degree matrix. In \Cref{app:sec_width}, we extend the analysis to a more general class of $\MPNN$s (see \Cref{thm:bound_general}), which includes stacking multiple nonlinearities. We also emphasize that the positive scalars $c_{\rs},c_{\mpas}$ represent the weighted contribution of the residual term and of the aggregation term, respectively. To ease the notations, we introduce a family of message-passing matrices that depend on $c_{\rs},c_{\mpas}$. 

\begin{definition}\label{def:oper}
For a graph shift operator $\Anorm$ and constants $c_{\rs},c_{\mpas} > 0$, we define $\oper := c_{\rs}\mathbf{I} + c_{\mpas}\Anorm\in\R^{n\times n}$ to be the message-passing matrix adopted by the $\MPNN$.
\end{definition}

As in \citet{xu2018representation} and \citet{ topping2021understanding}, we study the propagation of information in the $\MPNN$ via the Jacobian of node features after $m$ layers.  

\begin{theorem}[\textbf{Sensitivity bounds}]\label{cor:bound_MLP_MPNN}
Consider an $\MPNN$ as in Eq.~\eqref{eq:MPNN_mlp} for $m$ layers, with $c_{\up}$ the Lipschitz constant of the nonlinearity $\sigma$ and $w$ the maximal entry-value over all weight matrices. 
For $v,u\in\V$ and width $p$, we have 
\begin{equation}\label{eq:upper_bound_mlp}
    \left\| \frac{\partial \mathbf{h}_{v}^{(m)}}{\partial \mathbf{h}_{u}^{(0)}}\right\|_{L_1} \leq  (\underbrace{\vphantom{\oper^m}c_{\up}wp}_{\mathrm{model}})^{m}\underbrace{(\oper^{m})_{vu}}_{\mathrm{topology}},
\end{equation}
with $\oper^{m}$ the $m^{th}$-power of $\oper$ introduced in \Cref{def:oper}.
\end{theorem}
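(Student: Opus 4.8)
The plan is to prove the bound by induction on the number of layers $m$, unrolling the recursion in Eq.~\eqref{eq:MPNN_mlp} and controlling the Jacobian entry-by-entry. First I would set up notation: fix the target node $v$ and source node $u$, and for each intermediate layer $t$ consider the Jacobian block $\partial \mathbf{h}_{w}^{(t)}/\partial \mathbf{h}_{u}^{(0)} \in \R^{p\times p}$ for every node $w$. Differentiating Eq.~\eqref{eq:MPNN_mlp} and applying the chain rule gives
\begin{equation*}
    \frac{\partial \mathbf{h}_{v}^{(t+1)}}{\partial \mathbf{h}_{u}^{(0)}} = \mathrm{diag}\big(\up'(\cdot)\big)\Big(c_{\rs}\W_{\rs}^{(t)}\frac{\partial \mathbf{h}_{v}^{(t)}}{\partial \mathbf{h}_{u}^{(0)}} + c_{\mpas}\W_{\mpas}^{(t)}\sum_{w}\Anorm_{vw}\frac{\partial \mathbf{h}_{w}^{(t)}}{\partial \mathbf{h}_{u}^{(0)}}\Big),
\end{equation*}
where $\up'(\cdot)$ is evaluated at the preactivation of node $v$ at layer $t$. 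The key observation is that the scalar coefficients $c_{\rs}\Anorm_{vv}$-type terms (from the identity) and $c_{\mpas}\Anorm_{vw}$ combine exactly into the entries of $\oper = c_{\rs}\mathbf{I} + c_{\mpas}\Anorm$, so the ``graph part'' of one step of the recursion is governed by multiplication by $\oper$.

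Next I would pass to norms. Taking the $L_1$ (entrywise $\ell_1$) norm of a matrix product, I use submultiplicativity-type estimates: $\|\mathrm{diag}(\up'(\cdot))\,M\|_{L_1} \le c_{\up}\|M\|_{L_1}$ since $|\up'|\le c_\up$ pointwise, and $\|\W M\|_{L_1} \le \|\W\|_{\text{(appropriate norm)}}\|M\|_{L_1}$. Here the bound $w$ on the maximal entry of the weight matrices enters: a $p\times p$ matrix with entries bounded by $w$ has operator-norm-type bound $wp$ against the $L_1$ norm (each output coordinate is a sum of $p$ terms each scaled by at most $w$), which is the source of the $wp$ factor per layer. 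Combining, one step of the recursion yields
\begin{equation*}
    \Big\|\frac{\partial \mathbf{h}_{v}^{(t+1)}}{\partial \mathbf{h}_{u}^{(0)}}\Big\|_{L_1} \le c_{\up}wp\sum_{w}(\oper)_{vw}\Big\|\frac{\partial \mathbf{h}_{w}^{(t)}}{\partial \mathbf{h}_{u}^{(0)}}\Big\|_{L_1}.
\end{equation*}
Iterating this inequality from $t=m-1$ down to $t=0$, and using the base case $\partial \mathbf{h}_{w}^{(0)}/\partial \mathbf{h}_{u}^{(0)} = \delta_{wu}\mathbf{I}$ (so its $L_1$ norm is $1$ if $w=u$ and $0$ otherwise), the nested sums over intermediate nodes collapse precisely into $(\oper^m)_{vu}$, while the constants multiply to $(c_\up w p)^m$. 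This gives Eq.~\eqref{eq:upper_bound_mlp}.

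The main obstacle — really the only subtle point — is bookkeeping the matrix norms consistently so that the ``$wp$ per layer'' constant is honest and the graph weights assemble cleanly into powers of $\oper$ without extra cross terms. Concretely, one must be careful that the choice of norm ($L_1$ entrywise on the Jacobian blocks) interacts correctly with left-multiplication by $\W$ and with the diagonal nonlinearity factor, and that the sum $c_\rs(\text{identity contribution}) + c_\mpas\sum_w \Anorm_{vw}(\cdots)$ is bounded by $\sum_w (\oper)_{vw}(\cdots)$ using nonnegativity of the entries of $\oper$ (which holds since $c_\rs,c_\mpas>0$ and a GSO has the sparsity pattern of $\mathbf{A}$; if $\Anorm$ can have negative entries one takes absolute values and notes $|(\oper)_{vw}| \le (c_\rs\mathbf{I}+c_\mpas|\Anorm|)_{vw}$, at worst replacing $\oper$ by its absolute-value version, which for the standard normalized adjacency choices coincides with $\oper$). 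Everything else is a routine induction, and the more general architecture (stacking nonlinearities, as in \Cref{thm:bound_general}) follows by the same argument with $c_\up^m$ replaced by the product of the Lipschitz constants of the nonlinear maps at each layer.
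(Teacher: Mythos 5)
Your proposal is correct and follows essentially the same route as the paper's proof: an induction on the number of layers, differentiating Eq.~\eqref{eq:MPNN_mlp} via the chain rule, extracting a factor $c_{\up}wp$ per layer from the nonlinearity and the $p\times p$ weight matrices with entries bounded by $w$, and letting the coefficients $c_{\rs}\mathbf{I}+c_{\mpas}\Anorm$ assemble into $(\oper^{m})_{vu}$ (the paper does the bookkeeping entrywise with Einstein summation rather than at the level of matrix $L_1$ norms, but the estimates are identical). Your remark on the nonnegativity of the entries of $\oper$ is a legitimate fine point that the paper handles only implicitly by taking $\Anorm$ to be a (normalized) adjacency matrix.
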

\noindent 
Over-squashing occurs if the right hand side of Eq.~\eqref{eq:upper_bound_mlp} is too small -- this will be related to the distance among $v$ and $u$ in 
\Cref{subsec:shallow}. A small derivative of $\mathbf{h}_v^{(m)}$ with respect to $\mathbf{h}_u^{(0)}$ means that after $m$ layers, {\em the feature at $v$ is mostly insensitive to the information initially contained at $u$}, and hence that messages have not been propagated effectively. \Cref{cor:bound_MLP_MPNN} clarifies how the model can impact over-squashing through (i) its Lipschitz regularity $c_{\up}, w$ and (ii) its width $p$. In fact, given a graph $\gph$ such that $(\oper^{m})_{vu}$ decays exponentially with $m$, the $\MPNN$ can compensate by increasing the width $p$ and the magnitude of $w$ and $c_\sigma$.  
This confirms 
analytically the discussion in \citet{alon2020bottleneck}: \textbf{a larger hidden dimension $p$ does mitigate over-squashing}. However, this is not an optimal solution since increasing the contribution of the model (i.e. the term $c_{\up}wp$) may lead to over-fitting and poorer generalization \citep{bartlett2017spectrally}. Taking larger values of $c_{\up},w,p$ affects the model {\em globally} and does not target the sensitivity of specific node pairs induced by the topology via $\oper$. 



\paragraph{Validating the theoretical results.}

 We validate empirically the message from \Cref{cor:bound_MLP_MPNN}: if the task presents long-range dependencies, increasing the hidden dimension mitigates over-squashing and therefore has a positive impact on the performance. We consider the following `graph transfer' task, building upon \citet{bodnar2021weisfeilercell}: given a graph, consider source and target nodes, placed at distance $r$ from each other. We assign a one-hot encoded label to the target and a constant unitary feature vector to all other nodes. The goal is to assign to the source node the feature vector of the target. Partly due to over-squashing, performance is expected to degrade as $r$ increases. 

 To validate that this holds irrespective of the graph structure, we evaluate across three graph topologies, called  $\mathsf{CrossedRing}$, $\mathsf{Ring}$ and $\mathsf{CliquePath}$ - see \Cref{app:graph-transfer} for further details. While the topology is also expected to affect the performance (as confirmed in \Cref{sec:depth}), given a fixed topology, we expect the model to benefit from an increase of hidden dimension.
 
To verify this behaviour, we evaluate $\mathsf{GCN}$ \citep{kipf2016semi} on the three graph transfer tasks increasing the hidden dimension, but keeping the number of layers equal to the distance between source and target, as shown in Figure \ref{fig:graph-transfer-hidden-dim}. The results verify the intuition from the theorem that a higher hidden dimension helps the $\mathsf{GCN}$ model solve the task to larger distances 
across the three graph-topologies.

\begin{figure}[!htbp]
    \centering
    \includegraphics[width=0.48\textwidth]{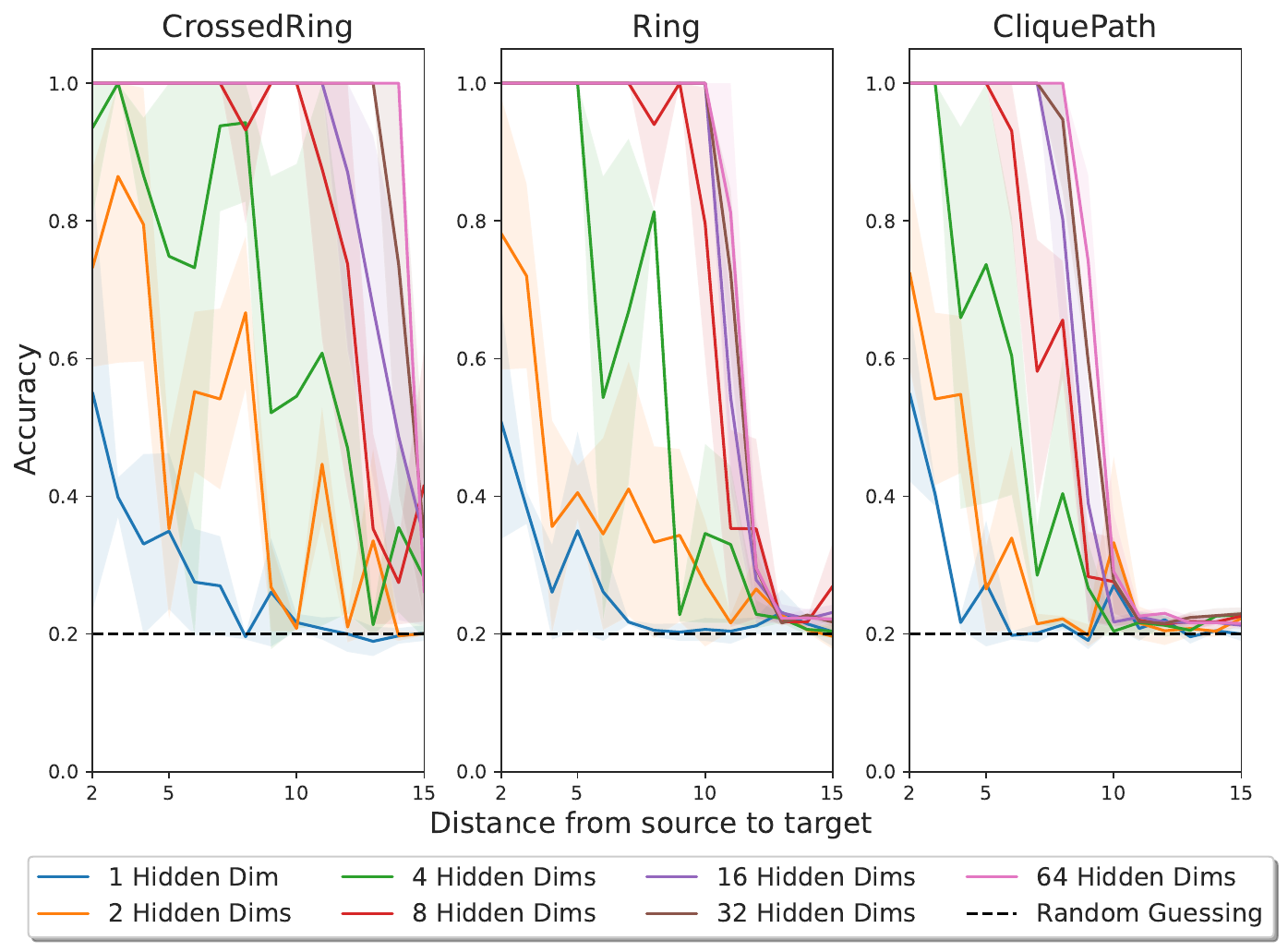}
    \caption{Performance of $\mathsf{GCN}$ on the $\mathsf{CrossedRing}$, $\mathsf{Ring}$, and $\mathsf{CliquePath}$ tasks obtained by varying the hidden dimension. Increasing the hidden dimension helps mitigate the over-squashing effect, in accordance with \Cref{cor:bound_MLP_MPNN}.}
    \label{fig:graph-transfer-hidden-dim}
\end{figure}


\begin{tcolorbox}[boxsep=0mm,left=2.5mm,right=2.5mm]
\textbf{Message of the Section:} {\em The Lipschitz regularity, weights, and width of the underlying $\MPNN$ can help mitigate the effect of over-squashing. However, this is a remedy that comes at the expense of generalization and does not address the real culprit behind over-squashing: the graph-topology}.   
\end{tcolorbox}
\vspace{-5pt}

\section{The impact of depth}\label{sec:depth}
Consider a graph $\gph$ and 
a task with `long-range' dependencies, meaning that there exists (at least) a node $v$ whose embedding has to account for information contained at some node $u$ at distance $r \gg 1$. One natural attempt at resolving over-squashing amounts to increasing the number of layers $m$ to compensate for the distance. We prove that the depth of the $\MPNN$ will, generally, not help with over-squashing. We show that: (i) When the depth $m$ is comparable to the distance, over-squashing is bound to occur among distant nodes -- in fact, the distance at which over-squashing happens, is strongly dependent on the underlying topology; 
(ii) If we take a large number of layers to cover the long-range interactions, we rigorously prove under which {\em exact} conditions, $\MPNN$s incur the vanishing gradients problem.



\subsection{The shallow-diameter regime: over-squashing occurs among distant nodes}\label{subsec:shallow}

Consider the scenario above, with two nodes $v,u$, whose interaction is important for the task, at distance $r$. We first focus on the regime $m \sim r$. We refer to this as the {\em shallow-diameter} regime, since the number of layers $m$ is comparable to the diameter of the graph. 

From now on, we set $\Anorm = \mathbf{D}^{-1/2}\mathbf{A}\mathbf{D}^{-1/2}$, where we recall that $\mathbf{A}$ is the adjacency matrix and $\mathbf{D}$ is the degree matrix. This is not restrictive, but allows us to derive more explicit bounds and, later, bring into the equation the spectrum of the graph. We note that results can be extended easily to $\mathbf{D}^{-1}\mathbf{A}$, given that this matrix is similar to $\Anorm$, and, in expectation, to $\mathbf{A}$ by normalizing the Jacobian as in \citet{xu2018how} and Section A in the Appendix of \citet{topping2021understanding}. 


\begin{theorem}[\textbf{Over-squashing among distant nodes}]\label{cor:over-squasing_distance} Given an $\MPNN$ as in Eq.~\eqref{eq:MPNN_mlp}, with $c_{\mpas} \leq 1$, let $v,u\in\mathsf{V}$ be at distance $r$. Let $c_{\up}$ be the Lipschitz constant of $\sigma$, $w$ the maximal entry-value over all weight matrices, $d_{\mathrm{min}}$ the minimal degree of $\gph$, and $\gamma_{\ell}(v,u)$ the number of walks from $v$ to $u$ of maximal length $\ell$. For any $0 \leq k < r$, there exists $C_{k} > 0$ {\bf independent} of $r$ and of the graph, such that 
\begin{equation}\label{eq:cor_distance}
   \left\| \frac{\partial \mathbf{h}_{v}^{(r+k)}}{\partial \mathbf{h}_{u}^{(0)}}\right\|_{L_1} \leq C_{k}\gamma_{r+k}(v,u)\Big(\frac{2c_{\up}wp}{d_{\mathrm{min}}}\Big)^r.
\end{equation}
\end{theorem}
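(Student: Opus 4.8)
The plan is to derive the estimate directly from the sensitivity bound of \Cref{cor:bound_MLP_MPNN} applied with $m=r+k$ layers, and then to control the topological term $(\oper^{r+k})_{vu}$ by a walk-counting argument. \Cref{cor:bound_MLP_MPNN} gives
\[
\left\|\frac{\partial \mathbf h_v^{(r+k)}}{\partial \mathbf h_u^{(0)}}\right\|_{L_1}\;\le\;(c_\up w p)^{\,r+k}\,(\oper^{r+k})_{vu},
\]
where $\oper=c_\rs\mathbf I+c_\mpas\Anorm$ with $\Anorm=\mathbf D^{-1/2}\mathbf A\mathbf D^{-1/2}$. Since $\mathbf I$ and $\Anorm$ commute, the binomial theorem yields $\oper^{r+k}=\sum_{j=0}^{r+k}\binom{r+k}{j}c_\rs^{\,r+k-j}c_\mpas^{\,j}\Anorm^{j}$. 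The first key point is that $(\Anorm^{j})_{vu}=0$ whenever $j<r$: a nonzero $(v,u)$-entry of $\Anorm^{j}$ requires a walk of length $j$ from $v$ to $u$, which is impossible since $\mathrm{dist}(v,u)=r$. Hence only the $k+1$ terms with $r\le j\le r+k$ contribute.

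The second step bounds each surviving entry $(\Anorm^{j})_{vu}$. Writing $\Anorm^{j}$ as a sum over walks $v=w_0,w_1,\dots,w_j=u$, each walk contributes $\prod_{i=0}^{j-1}(d_{w_i}d_{w_{i+1}})^{-1/2}$; telescoping, the two endpoints appear once and each intermediate vertex twice, so this equals $(d_v d_u)^{-1/2}\prod_{l=1}^{j-1}d_{w_l}^{-1}\le d_{\min}^{-j}$ (here we use that $\gph$ is connected, so $d_{\min}\ge 1$). Thus $(\Anorm^{j})_{vu}\le W_j(v,u)\,d_{\min}^{-j}$, with $W_j(v,u)$ the number of walks of length exactly $j$. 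Since $W_j(v,u)\le\gamma_{r+k}(v,u)$ for every $j\le r+k$ and $d_{\min}^{-j}\le d_{\min}^{-r}$ for $j\ge r$, combining with the expansion above gives
\[
(\oper^{r+k})_{vu}\;\le\;\frac{\gamma_{r+k}(v,u)}{d_{\min}^{\,r}}\sum_{j=r}^{r+k}\binom{r+k}{j}c_\rs^{\,r+k-j}c_\mpas^{\,j}.
\]

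It remains to absorb the residual combinatorial factor into a constant $C_k$. Using $c_\mpas\le 1$ and reindexing $i=r+k-j\in\{0,\dots,k\}$, the sum is at most $\sum_{i=0}^{k}\binom{r+k}{i}c_\rs^{\,i}\le (r+k)^{k}\sum_{i=0}^{k}c_\rs^{\,i}$, i.e.\ a polynomial in $r$ of degree $k$ whose coefficient $B_k:=\sum_{i=0}^{k}c_\rs^{\,i}$ depends only on $c_\rs$ and $k$. Plugging this back and splitting $(c_\up wp)^{r+k}=(c_\up wp)^{r}(c_\up wp)^{k}$, the right-hand side becomes $\gamma_{r+k}(v,u)\,(c_\up wp/d_{\min})^{r}\cdot(c_\up wp)^{k}(r+k)^{k}B_k$. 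Because $0\le k<r$ implies $r+k<2r$, the factor $(c_\up wp)^{k}(r+k)^{k}B_k\,2^{-r}$ is bounded, uniformly over $r\ge 1$, by a constant $C_k$ depending only on $c_\up,w,p,c_\rs$ and $k$ — not on $r$ nor on $\gph$. This is exactly \eqref{eq:cor_distance}.

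The main obstacle is this last bookkeeping step. Since the hypotheses constrain only $c_\mpas$ and leave $c_\rs$ arbitrary, one cannot simply bound the surviving binomial sum by $2^{r+k}$; the crucial observation is that precisely $k+1$ terms survive, so the sum grows only polynomially in $r$ (degree $k$), and this polynomial factor — together with the spurious $(c_\up wp)^{k}$ produced by running $r+k$ rather than $r$ layers — is dominated by the $2^{r}$ in the target bound, uniformly in $r$. The remaining ingredients (the binomial expansion, the vanishing of the low-order terms, and the walk-counting estimate for $\Anorm^{j}$) are routine.
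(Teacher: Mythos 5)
Your proof is correct and follows the same backbone as the paper's: apply \Cref{cor:bound_MLP_MPNN} with $m=r+k$, binomial-expand $\oper^{r+k}=\sum_j\binom{r+k}{j}c_\rs^{\,r+k-j}c_\mpas^{\,j}\Anorm^{j}$, observe that all terms with $j<r$ vanish because $v$ and $u$ are at distance $r$, and bound each surviving $(\Anorm^{j})_{vu}$ via the walk expansion by $W_j(v,u)\,d_{\min}^{-j}\le \gamma_{r+k}(v,u)\,d_{\min}^{-j}$. The two arguments part ways only at the final combinatorial step --- precisely the step you flag as ``the main obstacle.'' You pull $d_{\min}^{-r}$ out of the sum immediately (using $d_{\min}\ge 1$ so that $d_{\min}^{-j}\le d_{\min}^{-r}$ for $j\ge r$), bound the residual sum of $k+1$ binomial coefficients by a degree-$k$ polynomial in $r$, and then invoke the qualitative fact that $(r+k)^{k}2^{-r}$ is uniformly bounded over $r>k$; this establishes the existence of $C_k$ but leaves it as an implicit supremum. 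The paper instead keeps $(c_\mpas/d_{\min})^{j}$ inside the sum, applies the sharper binomial estimate $\binom{r+k}{r+q}\le\bigl(1+\tfrac{k}{k+1}\bigr)^{r-k}(1+k)^{k}\binom{k}{q}$, and factors $\bigl(1+\tfrac{k}{k+1}\bigr)^{r}\le 2^{r}$ out directly, obtaining the closed-form constant $C_k=\bigl(c_\up(c_\rs+c_\mpas)wp(k+1)\bigr)^{k}$ together with the slightly sharper base $2c_\up wp\,c_\mpas/d_{\min}$. The trade-off is clarity versus explicitness: your polynomial-beats-geometric argument requires no combinatorial ingenuity, but it does not produce the explicit $C_k$ (nor the extra $c_\mpas^{r}$ factor) that the paper records in its Appendix.
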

To understand the bound above, fix $k < r$ and assume that nodes $v,u$ are `badly' connected, meaning that the number of walks $\gamma_{r+k}(v,u)$ of length at most $r+k$, is small. If $2\, c_{\up}wp < d_{\mathrm{min}}$, then the bound on the Jacobian in Eq.~\eqref{eq:cor_distance} {\em decays exponentially with the distance} $r$. 
Note that the bound above considers $d_{\mathrm{min}}$ and $\gamma_{r+k}$ as a worst case scenario. If one has a better understanding of the topology of the graph, sharper bounds can be derived by estimating $(\oper^{r})_{vu}$. 
\Cref{cor:over-squasing_distance} implies that, when the depth $m$ is comparable to the diameter of $\gph$,
\textbf{\em over-squashing becomes an issue if the task depends on the interaction of nodes $v,u$ at `large' distance $r$.} 
In fact, 
\Cref{cor:over-squasing_distance} 
shows that the distance at which the Jacobian sensitivity falls below a given threshold, depends on both the model, via $c_{\up}, w,p$, and on the graph, through $d_{\mathrm{min}}$ and $\gamma_{r+k}(v,u)$. We finally observe that \Cref{cor:over-squasing_distance} generalizes the analysis in \citet{topping2021understanding} in multiple ways: (i) it holds for any width $p > 1$; (ii) it includes cases where $m > r$; (iii) it provides explicit estimates in terms of number of walks and degree information. 

{\bf Remark.} What if $2c_{\up}wp > d_{\mathrm{min}}$? Taking larger weights and hidden dimension increases the sensitivity of node features. However, this occurs {\em everywhere} in the graph the same. Accordingly, nodes at shorter distances will, on average, still have sensitivity exponentially larger than nodes at large distance. This is validated in our synthetic experiments below, where we do not have constraints on the weights.


\paragraph{Validating the theoretical results.}
From Theorem \ref{cor:over-squasing_distance}, we derive a strong indication of the difficulty of a task by calculating an upper bound on the Jacobian. We consider the same graph transfer tasks introduced above, namely $\mathsf{CrossedRing}$, $\mathsf{Ring}$, and $\mathsf{CliquePath}$. For these special cases, we can derive a refined version of the r.h.s in Eq.~\eqref{eq:cor_distance}: in particular, $k = 0$ (i.e. the depth coincides with the distance among source and target) and the term $\gamma_{r}(v,u)(d_{\mathrm{min}})^{-r}$ can be replaced by the exact quantity $(\oper^{r})_{vu}$. 
Fixing a distance $r$ between source $u$ and target $v$ then, if we consider for example the $\mathsf{GCN}$-case, we have $\oper = \Anorm$ so that the term $(\oper^{r})_{vu}$ can be computed explicitly: 
\begin{alignat*}{2}
&(\oper^{r})_{vu} = (3/2)^{-(r-1)} \qquad &&\text{ for } \mathsf{CrossedRing}\\
&(\oper^{r})_{vu} =2^{-(r-1)} \quad &&\text{ for }  \mathsf{Ring}\\ 
& (\oper^{r})_{vu} = 2^{-(r-2)}/(r\sqrt{r-2}) &&\text{ for } 
 \mathsf{CliquePath}.
\end{alignat*}
Given 
an $\MPNN$, terms like $c_{\up},w,p$ entering \Cref{cor:over-squasing_distance} are independent of the graph-topology and hence can be assumed to behave, roughly, the same across different graphs.  
As a consequence, we can expect over-squashing to be more problematic for $\mathsf{CliquePath}$, followed by $\mathsf{Ring}$, and less prevalent comparatively in $\mathsf{CrossedRing}$.
Figure \ref{fig:graph-transfer-main} shows the behaviour of $\mathsf{GIN}$ \citep{xu2018how}, $\mathsf{SAGE}$ \citep{hamilton2017inductive}, $\mathsf{GCN}$ \citep{kipf2016semi}, and $\mathsf{GAT}$ \citep{velivckovic2017graph} on the aformentioned tasks. We verify the conjectured difficulty. $\mathsf{CliquePath}$ is the consistently hardest task, followed by $\mathsf{Ring}$, and $\mathsf{CrossedRing}$. Furthermore, the decay of the performance to random guessing for the {\em same} architecture across different graph topologies highlights that this drop cannot be simply labelled as `vanishing gradients' since for certain topologies the same model can, in fact, achieve perfect accuracy. This validates that the underlying topology has a strong impact on the distance at which over-squashing is expected to happen. Moreover, we confirm that in the regime where the depth $m$ is comparable to the distance $r$, over-squashing will occur if $r$ is large enough. 


\begin{figure}[!htbp]
    \centering
    \includegraphics[width=0.48\textwidth]{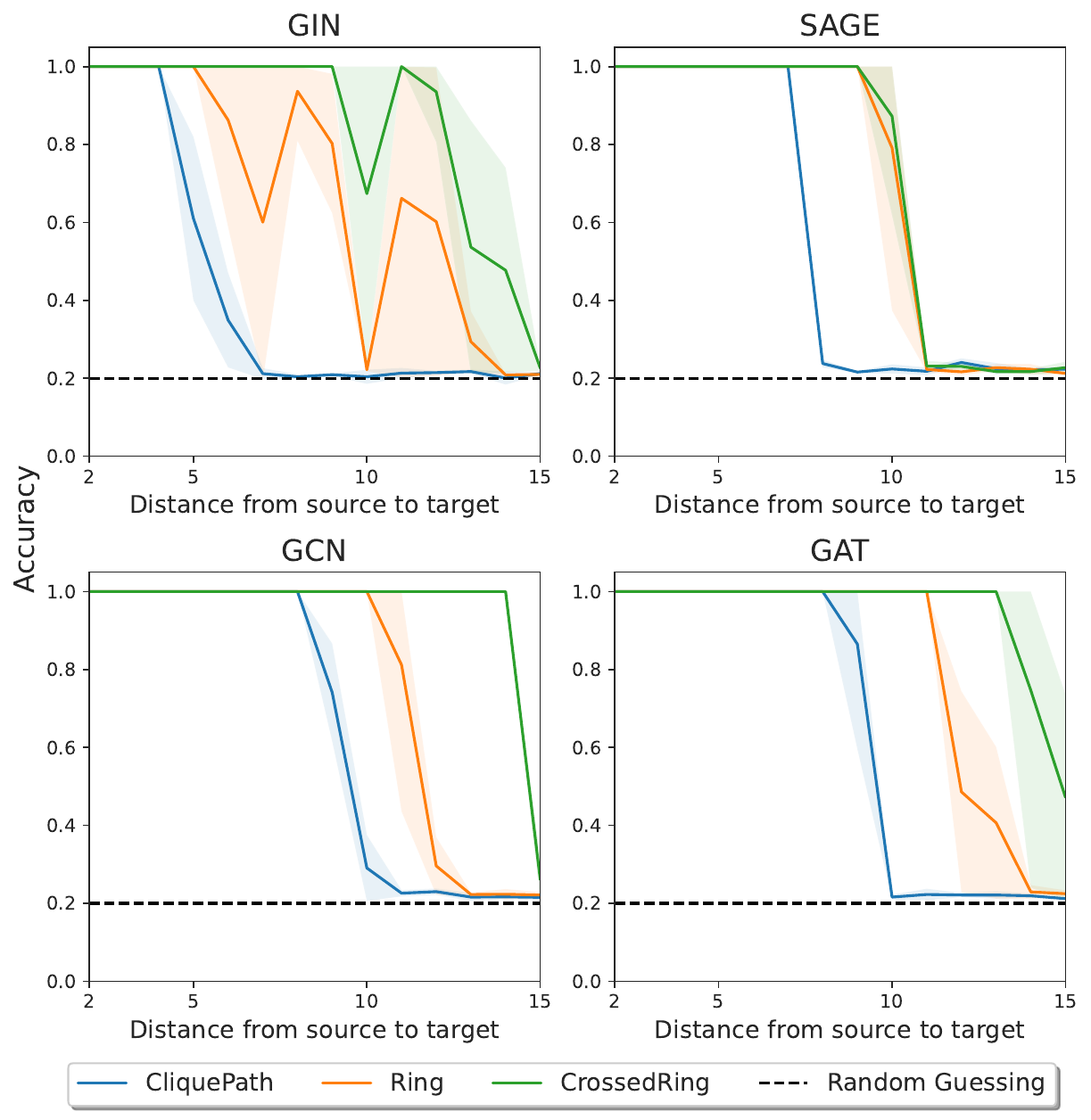}
    \caption{Performance of $\mathsf{GIN}$, $\mathsf{SAGE}$, $\mathsf{GCN}$, and $\mathsf{GAT}$ on the $\mathsf{CliquePath}$, $\mathsf{Ring}$, and $\mathsf{CrossedRing}$ tasks. In the case where depth and distance are comparable, over-squashing highly depends on the topology of the graph as we increase the distance.}
    \label{fig:graph-transfer-main}
\end{figure}

\subsection{The deep regime: vanishing gradients dominate}\label{subsec:deep}
We now focus on the regime where the number of layers $m \gg r$ is 
large. 
We show that in this case, vanishing gradients can occur and make the entire model insensitive. 
Given a weight $\theta^{(k)}$ entering a layer $k$, one can write the gradient of the loss after $m$ layers as \citep{pascanu2013difficulty}
\begin{equation}
    \frac{\partial \mathcal{L}}{\partial \theta^{(k)}} = \sum_{v,u\in V}\Big(\frac{\partial \mathcal{L}}{\partial \mathbf{h}^{(m)}_v}\frac{\partial \mathbf{h}_u^{(k)}}{\partial \vphantom{\mathbf{h}^{(k)}_u}\theta^{(k)}}\Big)\underbrace{\frac{\partial \mathbf{h}_v^{(m)}}{\partial \mathbf{h}_{u}^{(k)}}}_{\mathrm{sensitivity}}
\end{equation}
\noindent We 
provide {\bf exact conditions} for $\MPNN$s to incur the vanishing gradient problem, intended as the gradients of the loss decaying exponentially with the number of layers $m$. 
\begin{theorem}[\textbf{Vanishing gradients}]\label{thm:vanishing} Consider an $\MPNN$ as in Eq.~\eqref{eq:MPNN_mlp} for $m$ layers with a quadratic loss $\mathcal{L}$. Assume that (i) $\sigma$ has Lipschitz constant $c_{\up}$ and $\sigma(0) = 0$, and (ii) weight matrices have spectral norm bounded by $\mu > 0$. 
Given any weight $\theta$ entering a layer $k$, there exists a constant $C > 0$ independent of $m$, such that
\begin{align}
    \left\vert \frac{\partial \mathcal{L}}{\partial \theta}\right\vert &\leq C\left(c_{\up}\mu(c_{\rs} + c_{\mpas})\right)^{m-k}\left(1  +\left(c_{\up}\mu(c_{\rs} + c_{\mpas})\right)^{m}\right).
\end{align}
\noindent In particular, if $c_{\up}\mu(c_{\rs} + c_{\mpas}) < 1$, then the gradients of the loss decay to zero exponentially fast with $m$.
\end{theorem}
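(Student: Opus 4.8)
The plan is to bound the three factors in the gradient decomposition
\[
\frac{\partial \mathcal{L}}{\partial \theta} = \sum_{v,u\in\V}\Bigl(\frac{\partial \mathcal{L}}{\partial \mathbf{h}^{(m)}_v}\,\frac{\partial \mathbf{h}_u^{(k)}}{\partial \theta}\Bigr)\,\frac{\partial \mathbf{h}_v^{(m)}}{\partial \mathbf{h}_u^{(k)}}
\]
separately and then recombine, collecting every quantity that does not depend on $m$ into a single constant $C$. Throughout write $\alpha := c_{\up}\,\mu\,(c_{\rs}+c_{\mpas})$. The core estimate is a bound on a single layer's Jacobian: stacking node features into $\mathbf{H}^{(t)}\in\R^{n\times p}$, Eq.~\eqref{eq:MPNN_mlp} reads $\mathbf{H}^{(t+1)} = \sigma\bigl(c_{\rs}\mathbf{H}^{(t)}(\W_{\rs}^{(t)})^{\top} + c_{\mpas}\Anorm\mathbf{H}^{(t)}(\W_{\mpas}^{(t)})^{\top}\bigr)$, so, after vectorizing and using $\mathrm{vec}(AXB)=(B^{\top}\!\otimes A)\mathrm{vec}(X)$, the Jacobian $\partial\mathbf{H}^{(t+1)}/\partial\mathbf{H}^{(t)}$ equals $\mathrm{diag}(\sigma'(\cdot))\bigl(c_{\rs}\,\W_{\rs}^{(t)}\!\otimes I_n + c_{\mpas}\,\W_{\mpas}^{(t)}\!\otimes\Anorm\bigr)$. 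Using $\|\sigma'\|_{\infty}\le c_{\up}$, $\|\W_{\rs}^{(t)}\|,\|\W_{\mpas}^{(t)}\|\le\mu$, $\|A\otimes B\|=\|A\|\,\|B\|$, the triangle inequality, and the fact that the normalized adjacency $\Anorm=\mathbf{D}^{-1/2}\mathbf{A}\mathbf{D}^{-1/2}$ has spectrum in $[-1,1]$ hence spectral norm at most $1$, one gets $\|\partial\mathbf{H}^{(t+1)}/\partial\mathbf{H}^{(t)}\|\le\alpha$. By the chain rule and submultiplicativity, the sensitivity block obeys $\|\partial\mathbf{h}_v^{(m)}/\partial\mathbf{h}_u^{(k)}\|\le\|\partial\mathbf{H}^{(m)}/\partial\mathbf{H}^{(k)}\|\le\alpha^{\,m-k}$, all matrix norms being equivalent up to $p$-dependent factors absorbed into $C$.

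Next I would control the two remaining factors. For the feature norms, $\sigma(0)=0$ and Lipschitzness give $\|\mathbf{H}^{(t+1)}\|\le c_{\up}\,\|c_{\rs}\mathbf{H}^{(t)}(\W_{\rs}^{(t)})^{\top}+c_{\mpas}\Anorm\mathbf{H}^{(t)}(\W_{\mpas}^{(t)})^{\top}\|\le\alpha\|\mathbf{H}^{(t)}\|$, hence $\|\mathbf{H}^{(t)}\|\le\alpha^{\,t}\|\mathbf{H}^{(0)}\|$. For the quadratic loss, $\partial\mathcal{L}/\partial\mathbf{h}_v^{(m)}$ is affine in $\mathbf{h}_v^{(m)}$ (e.g., $\mathbf{h}_v^{(m)}-\mathbf{y}_v$ for a node-level target), so $\|\partial\mathcal{L}/\partial\mathbf{h}_v^{(m)}\|\le\alpha^{\,m}\|\mathbf{H}^{(0)}\|+\max_v\|\mathbf{y}_v\|$. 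Finally, $\partial\mathbf{h}_u^{(k)}/\partial\theta$ equals $\sigma'$ evaluated at the $k$-th pre-activation times an entry of $\mathbf{h}_u^{(k-1)}$ or of $(\Anorm\mathbf{H}^{(k-1)})_u$, according to whether $\theta$ sits in $\W_{\rs}^{(k-1)}$ or $\W_{\mpas}^{(k-1)}$; hence it is bounded by $c_{\up}\max(c_{\rs},c_{\mpas})\,\alpha^{\,k-1}\|\mathbf{H}^{(0)}\|$, which depends on $k$ but not on $m$.

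To conclude, I multiply the three bounds, sum the resulting $n^2$ contributions over $(v,u)\in\V\times\V$, and absorb into $C$ every $m$-independent factor — the $k$-dependent bound on $\partial\mathbf{h}_u^{(k)}/\partial\theta$, the quantities $\|\mathbf{H}^{(0)}\|$ and $\max_v\|\mathbf{y}_v\|$, the norm-equivalence constants, and $n^2$. This gives $\lvert\partial\mathcal{L}/\partial\theta\rvert\le C\,\alpha^{\,m-k}\bigl(1+\alpha^{\,m}\bigr)$, which is precisely the claim after substituting $\alpha=c_{\up}\mu(c_{\rs}+c_{\mpas})$. If $\alpha<1$, then with $k$ fixed and $m\to\infty$ we have $\alpha^{\,m-k}(1+\alpha^{\,m})\le 2\,\alpha^{\,m-k}\to 0$ geometrically, giving the vanishing-gradient conclusion.

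I expect the one-layer Jacobian estimate to be the only step needing genuine care: one must set up the Kronecker-product bookkeeping for a layer that simultaneously applies a pointwise nonlinearity, a channel-mixing residual map, and the message-passing term, and verify $\|\Anorm\|\le 1$. Everything downstream — iterating the feature-norm bound, differentiating the quadratic loss, and summing a geometric series — is routine. Two minor points worth flagging: $\partial\mathbf{h}_v^{(m)}/\partial\mathbf{h}_u^{(k)}$ is only a sub-block of the full layer-to-layer Jacobian, so its operator norm is dominated by that of the full product, which is what we actually estimate; and letting $C$ depend on $k$ (but not on $m$) is exactly what the statement permits.
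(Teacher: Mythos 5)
Your proposal is correct and follows essentially the same route as the paper's proof: the same three-factor decomposition of $\partial\mathcal{L}/\partial\theta$, the same Kronecker-product bound $\lVert c_{\rs}\W_{\rs}\otimes \mathbf{I}+c_{\mpas}\W_{\mpas}\otimes\Anorm\rVert_2\le\mu(c_{\rs}+c_{\mpas})$ giving the $\alpha^{m-k}$ sensitivity estimate, the same feature-norm recursion from $\sigma(0)=0$ yielding the $\alpha^{m}$ factor from the quadratic loss, and the same absorption of the $k$-dependent, $m$-independent term $\partial\mathbf{h}_u^{(k)}/\partial\theta$ into $C$. No gaps.
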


\noindent The problem of vanishing gradients for graph convolutional networks have been studied from an empirical perspective \citep{li2019deepgcns,li2021training}. \Cref{thm:vanishing} provides sufficient conditions for the vanishing of gradients to occur in a large class of $\MPNN$s that also include (a form of) residual connections through the contribution of $c_{\rs}$ in Eq.~\eqref{eq:MPNN_mlp}. This extends a behaviour studied for Recurrent Neural Networks \citep{bengio1994learning,hochreiter1997long, pascanu2013difficulty, rusch2021coupled, rusch2021unicornn} to the $\MPNN$ class. We also mention that some discussion on vanishing gradients for $\MPNN$s can be found in \citet{ruiz2020gated} and \citet{rusch2022graph}. A few final comments are in order. (i) The bound in \Cref{thm:vanishing} seems to `hide' the contribution of the graph. This is, in fact, because the spectral norm of the graph operator $\oper$ is $c_{\rs} + c_{\mpas}$ -- we reserve the investigation of more general graph shift operators \citep{dasoulas2021learning} to future work. (ii) \Cref{cor:over-squasing_distance} shows that if the distance $r$ is large enough and we take the number of layers $m$ s.t. $m \sim r$, over-squashing arises among nodes at distance $r$. Taking the number of layers large enough though, may incur the vanishing gradient problem 
\Cref{thm:vanishing}. 
In principle, there might be an intermediate regime where $m$ is larger than $r$, but {\em not} too large, in which the depth could help with over-squashing before it leads to vanishing gradients. Given a graph $\gph$, and bounds on the Lipschitz regularity and width, we conjecture though that there exists $\tilde{r}$, depending on the topology of $\gph$, such that if the task has interactions at distance $r > \tilde{r}$, no number of layers can allow the $\MPNN$ class to solve it. This is left for future work. 



\begin{tcolorbox}[boxsep=0mm,left=2.5mm,right=2.5mm]
\textbf{Message of the Section:} \em Increasing the depth $m$ will, in general, not fix over-squashing. As we increase $m$, 
$\MPNN$s transition from over-squashing (\Cref{cor:over-squasing_distance}) to vanishing gradients (\Cref{thm:vanishing}). 
\end{tcolorbox}
\vspace{-5pt}

\section{The impact of topology }\label{sec:topology}
We finally discuss the impact of the graph topology, and in particular of the graph spectrum, on over-squashing. This allows us to draw a unified framework that shows why existing approaches manage to alleviate over-squashing by either spatial or spectral rewiring (\Cref{sec:related_work}).  

\subsection{On over-squashing and access time} Throughout the section we relate over-squashing to well-known properties of random walks on graphs. To this aim, we first review basic concepts about random walks.

\paragraph{Access and commute time.} A Random Walk (RW) on $\gph$ is a Markov chain which, at each step, moves from a node $v$ to one of its neighbours with probability $1/d_v$. 
Several properties about RWs have been studied. We are particularly interested in the notion of {\em access time} $\mathsf{t}(v,u)$ and of {\em commute time} $\tau(v,u)$ (see \Cref{fig:effective-resistance}). The access time $\mathsf{t}(v,u)$ (also known as {\em hitting time}) is the  expected number of steps
before node $u$ is visited for a RW starting from node $v$. The commute time instead, represents the expected number of steps in a
RW starting at $v$ to reach node $u$ and {\em come back}. A high access (commute) time means that nodes $v,u$ generally struggle to visit each other in a RW -- this can happen if nodes are far-away, but it is in fact more general and strongly dependent on the topology.

Some connections between over-squashing and the topology have already been derived (\Cref{cor:over-squasing_distance}), but up to this point `topology' has entered the picture through `distances' only. 
In this section, we further link over-squashing to other quantities related to the topology of the graph, such as access time, commute time and the Cheeger constant. We ultimately provide a unified framework to understand how existing approaches manage to mitigate over-squashing via graph-rewiring.


\paragraph{Integrating information across different layers.} We consider a family of $\MPNN$s of the form
\begin{equation}\label{eq:mpnn_simplified}
    \mathbf{h}_{v}^{(t)} = \mathsf{ReLU}\Big(\W^{(t)}\Big(c_{\rs}\mathbf{h}_v^{(t-1)} + c_{\mpas}(\Anorm\mathbf{h}^{(t-1)})_v\Big)\Big).
\end{equation}
\noindent Similarly to \citet{kawaguchi2016deep,xu2018representation}, we require the following:


\begin{assumption}\label{assumption_main_body} All paths in the computation graph
of the model are activated with the same probability of
success $\rho$.
\end{assumption}
Take two nodes $v\neq u$ at distance $r\gg 1$ and imagine we are interested in sending information {\em from $u$ to $v$}. Given a layer $k < m$ of the $\MPNN$, by \Cref{cor:over-squasing_distance} we expect that $\mathbf{h}_v^{(m)}$ is much more sensitive to the information contained {\em at the same} node $v$ at an earlier layer $k$, i.e. $\mathbf{h}_{v}^{(k)}$, rather than to the information contained at a distant node $u$, i.e. $\mathbf{h}_{u}^{(k)}$. 
\noindent Accordingly, we introduce the following quantity: 
\begin{align*}
\mathbf{J}_{k}^{(m)}(v,u) := \frac{1}{d_v}\frac{\partial \mathbf{h}_{v}^{(m)}}{\partial \mathbf{h}_{v}^{(k)}} - \frac{1}{\sqrt{d_v d_u}}\frac{\partial \mathbf{h}_{v}^{(m)}}{\partial \mathbf{h}_{u}^{(k)}}. 
\end{align*}
\noindent We note that the normalization by degree stems from our choice $\Anorm = \mathbf{D}^{-1/2}\mathbf{A}\mathbf{D}^{-1/2}$. We provide an intuition for this term. 
Say that node $v$ at layer $m$ of the $\MPNN$ is mostly insensitive to the information sent from $u$ at layer $k$. Then, on average, we expect $\|\partial\mathbf{h}_v^{(m)} / \partial \mathbf{h}_u^{(k)} \| \ll \| \partial\mathbf{h}_v^{(m)} / \partial \mathbf{h}_v^{(k)} \| $. In the opposite case instead, we expect, on average, that $\| \partial\mathbf{h}_v^{(m)} / \partial \mathbf{h}_u^{(k)} \| \sim \| \partial\mathbf{h}_v^{(m)} / \partial \mathbf{h}_v^{(k)} \| $. Therefore $\| \mathbf{J}^{(m)}_k(v,u)\|$ will be {\em larger} when $v$ is (roughly) independent of the information contained at $u$ at layer $k$. We extend the same argument by accounting for messages sent at each 
layer $k \leq m$. 
\begin{definition}\label{def:obst}
The Jacobian obstruction of node $v$ with respect to node $u$ after $m$ layers is $
    \obst^{(m)}(v,u) =  \sum_{k = 0}^{m}\|\mathbf{J}_k^{(m)}(v,u)\|.
$
\end{definition}
\noindent 
\noindent 
As motivated above, a larger $\obst^{(m)}(v,u)$ means that, after $m$ layers, the representation of node $v$ is more likely to be insensitive to information contained at $u$ and conversely, a small $\obst^{(m)}(v,u)$ means that nodes $v$ is, on average, able to receive information from $u$. 
Differently from the Jacobian bounds of the earlier sections, here we consider the contribution coming from all layers $k \leq m$ (note the sum over layers $k$ in Definition \ref{def:obst}). 
\begin{theorem}[\textbf{Over-squashing and access-time}]\label{thm:access} Consider an $\MPNN$ as in Eq.~\eqref{eq:mpnn_simplified} and let Assumption \ref{assumption_main_body} hold. If $\nu$ is the smallest singular value across all weight matrices and $c_{\rs},c_{\mpas}$ are such that $\nu (c_{\rs} + c_{\mpas}) = 1$, then, 
in expectation, we have  
\[
\obst^{(m)}(v,u) \geq \frac{\rho}{\nu c_{\mpas}}\frac{\mathsf{t}(u,v)}{2\lvert \E\rvert}  + o(m), 
\]
\noindent with $o(m)\rightarrow 0$ exponentially fast with $m$. 

\end{theorem}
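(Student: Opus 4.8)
The plan is to pass to expectations over the activation randomness of Assumption~\ref{assumption_main_body}, reduce $\mathbb{E}[\obst^{(m)}(v,u)]$ to a signed series in the layer index involving entries of powers of the (normalized) message-passing matrix, and then recognize that series through the Green's-function / fundamental-matrix identity for the hitting time of a lazy random walk.

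First I would expand the Jacobians entering $\mathbf{J}_k^{(m)}(v,u)$. From Eq.~\eqref{eq:mpnn_simplified}, $\partial\mathbf{h}_v^{(t)}/\partial\mathbf{h}_w^{(t-1)} = (\oper)_{vw}\,\mathbf{D}_v^{(t)}\mathbf{W}^{(t)}$, where $\mathbf{D}_v^{(t)}$ is the diagonal $\{0,1\}$ matrix of $\mathsf{ReLU}$ derivatives at the pre-activation of $v$ in layer $t$. Iterating the chain rule and expanding over feature coordinates writes $\partial\mathbf{h}_v^{(m)}/\partial\mathbf{h}_u^{(k)}$ as a sum over paths in the computation graph, each contributing the product of its edge weights (entries of $\oper$ and of the $\mathbf{W}^{(t)}$) times the product of the $\{0,1\}$ activation gates along it. By Assumption~\ref{assumption_main_body} the gate product along any fixed path has expectation $\rho$, so by linearity of expectation (no independence is required) $\mathbb{E}[\partial\mathbf{h}_v^{(m)}/\partial\mathbf{h}_u^{(k)}] = \rho\,(\oper^{m-k})_{vu}\,\mathbf{W}^{(m)}\cdots\mathbf{W}^{(k+1)}$ for $k<m$, the node indices (carried by $\oper$) and feature indices (carried by the $\mathbf{W}$'s) separating. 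Hence $\mathbb{E}[\mathbf{J}_k^{(m)}(v,u)] = \rho\,s_{m-k}\,\mathbf{W}^{(m)}\cdots\mathbf{W}^{(k+1)}$ with $s_\ell := (\oper^\ell)_{vv}/d_v - (\oper^\ell)_{vu}/\sqrt{d_v d_u}$.

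By convexity of the norm, $\mathbb{E}\lVert\mathbf{J}_k^{(m)}(v,u)\rVert \geq \lVert\mathbb{E}\mathbf{J}_k^{(m)}(v,u)\rVert \geq \rho\,|s_{m-k}|\,\sigma_{\min}(\mathbf{W}^{(m)}\cdots\mathbf{W}^{(k+1)}) \geq \rho\,|s_{m-k}|\,\nu^{m-k}$, using submultiplicativity of the smallest singular value (the $\ell=0$ term is deterministic and the same inequality holds there trivially). Summing over $k$ and setting $\ell=m-k$, $\mathbb{E}[\obst^{(m)}(v,u)] \geq \rho\sum_{\ell=0}^m \nu^\ell|s_\ell| \geq \rho\,\bigl|\sum_{\ell=0}^m \nu^\ell s_\ell\bigr|$. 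The hypothesis $\nu(c_{\rs}+c_{\mpas})=1$ gives $\nu\oper = (1-\beta)\mathbf{I}+\beta\Anorm$ with $\beta:=\nu c_{\mpas}\in(0,1)$ (strictly, since $c_{\rs}>0$); as $\Anorm = \mathbf{D}^{1/2}P\mathbf{D}^{-1/2}$ for the random-walk matrix $P=\mathbf{D}^{-1}\mathbf{A}$, we get $\nu\oper = \mathbf{D}^{1/2}P_\beta\mathbf{D}^{-1/2}$ where $P_\beta=(1-\beta)\mathbf{I}+\beta P$ is the transition matrix of the $\beta$-lazy walk. Thus $\nu^\ell(\oper^\ell)_{vv}=(P_\beta^\ell)_{vv}$ and $\nu^\ell(\oper^\ell)_{vu}=\sqrt{d_v/d_u}\,(P_\beta^\ell)_{vu}$, and by reversibility of $P_\beta$ (so $(P_\beta^\ell)_{vu}/d_u=(P_\beta^\ell)_{uv}/d_v$) one obtains $\nu^\ell s_\ell = \tfrac1{d_v}\bigl((P_\beta^\ell)_{vv}-(P_\beta^\ell)_{uv}\bigr)$.

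It remains to sum the series. The chain $P_\beta$ is irreducible, aperiodic and reversible with stationary distribution $\pi_w=d_w/(2\lvert\E\rvert)$, and its fundamental matrix $Z=\sum_{\ell\geq0}(P_\beta^\ell-\mathbf{1}\pi^\top)$ satisfies the standard identity $\mathsf{t}_{P_\beta}(u,v)=(Z_{vv}-Z_{uv})/\pi_v$, i.e. $\sum_{\ell\geq0}\bigl((P_\beta^\ell)_{vv}-(P_\beta^\ell)_{uv}\bigr)=\pi_v\,\mathsf{t}_{P_\beta}(u,v)$. Since one effective move of the lazy walk costs $1/\beta$ steps in expectation and $v$ is first reached exactly at an effective move, $\mathsf{t}_{P_\beta}(u,v)=\mathsf{t}(u,v)/\beta$; hence $\sum_{\ell\geq0}\nu^\ell s_\ell = \tfrac1{d_v}\pi_v\,\mathsf{t}(u,v)/\beta = \tfrac1{\nu c_{\mpas}}\tfrac{\mathsf{t}(u,v)}{2\lvert\E\rvert}$. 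Finally, since all eigenvalues of $P_\beta$ other than $1$ lie strictly inside the unit disc (laziness removes periodicity), $(P_\beta^\ell)_{vv}$ and $(P_\beta^\ell)_{uv}$ both tend to $\pi_v$ geometrically, so $\sum_{\ell=0}^m\nu^\ell s_\ell$ differs from its limit by an exponentially small remainder; multiplying by $\rho$ gives $\mathbb{E}[\obst^{(m)}(v,u)] \geq \tfrac{\rho}{\nu c_{\mpas}}\tfrac{\mathsf{t}(u,v)}{2\lvert\E\rvert}+o(m)$. I expect the main difficulty to be the first step — turning Assumption~\ref{assumption_main_body} into the clean formula for $\mathbb{E}[\partial\mathbf{h}_v^{(m)}/\partial\mathbf{h}_u^{(k)}]$, i.e. the combinatorial bookkeeping of computation-graph paths together with the fact that the activation gates, though not independent, are averaged only linearly — with the minor additional care of ensuring $\beta\in(0,1)$ and a genuine spectral gap so that laziness really yields aperiodicity and geometric convergence.
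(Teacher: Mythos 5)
Your proposal is correct, and its first two-thirds coincides exactly with the paper's argument: pass to expectations via Assumption~\ref{assumption_main_body} to get $\mathbb{E}[\partial \mathbf{h}_v^{(m)}/\partial \mathbf{h}_u^{(k)}]=\rho\,(\oper^{m-k})_{vu}\prod_s \W^{(s)}$, lower-bound each $\lVert\mathbb{E}[\mathbf{J}_k^{(m)}]\rVert$ by $\rho\,\nu^{m-k}\lvert s_{m-k}\rvert$ via the smallest singular value, and pull the absolute value outside the sum by the triangle inequality. Where you diverge is in evaluating $\sum_{\ell=0}^{m}\nu^{\ell}s_{\ell}$: the paper diagonalizes $\oper$ in the Laplacian eigenbasis, expands the geometric sums in $\nu(c_{\rs}+c_{\mpas}(1-\lambda_{\ell}))$, and invokes Lov\'asz's \emph{spectral} formula $\mathsf{t}(u,v)=2\lvert\E\rvert\sum_{\ell\geq1}\lambda_{\ell}^{-1}\bigl(\eigen_{\ell}^{2}(v)/d_v-\eigen_{\ell}(v)\eigen_{\ell}(u)/\sqrt{d_vd_u}\bigr)$, whereas you conjugate $\nu\oper$ to the lazy transition matrix $P_{\beta}$, use reversibility, and apply the fundamental-matrix identity $\mathsf{t}_{P_\beta}(u,v)=(Z_{vv}-Z_{uv})/\pi_v$ together with the Wald rescaling $\mathsf{t}_{P_\beta}=\mathsf{t}/\beta$. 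The two routes are equivalent (the spectral formula is the diagonalized form of the Green's-function identity), but each buys something: the paper's version yields a fully explicit remainder, $\rho(1-\nu c_{\mpas}\lambda^{\ast})^{m+1}(n-1)/(\nu c_{\mpas}\lambda_1 d_{\mathrm{min}})$, in terms of the spectral gap, while your version makes the appearance of the access time conceptually transparent and isolates cleanly why laziness ($\beta<1$, i.e.\ $c_{\rs}>0$) guarantees the geometric decay of the $o(m)$ term -- a point the paper handles instead through the standing non-bipartiteness assumption on $\lambda_{n-1}$. Your closing caveats (linearity of expectation sufficing without independence of the gates, and $\beta\in(0,1)$ being needed for aperiodicity) are exactly the right places where care is required.
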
 
\noindent We note that an exact expansion of the term $o(m)$ is reported in the Appendix. 
We also observe that more general bounds are possible if $\nu(c_{\rs} + c_{\mpas}) < 1$ -- however, they will progressively become less informative in the limit $\nu (c_{\rs} + c_{\mpas}) \rightarrow 0$. \Cref{thm:access} shows that the obstruction is a function of the access time $\mathsf{t}(u,v)$; {\bf high access time, on average, translates into high obstruction for node $v$ to receive information from node $u$ inside the $\MPNN$}. This resonates with the intuition that access time is a measure of how easily a `diffusion' process starting at $u$ reaches $v$. We emphasize that the obstruction provided by the access time cannot be fixed by increasing the number of layers and in fact this is independent of the number of layers, further corroborating the analysis in \Cref{sec:depth}. 
Next, we relate over-squashing to commute time, and hence, to effective resistance.

\subsection{On over-squashing and commute time}\label{subsec:effective}
We now restrict our attention to a slightly more special form of over-squashing, where we are interested in nodes $v,u$ exchanging information both ways -- differently from before where we looked at node $v$ receiving information from node $u$. Following the same intuition described previously, we introduce the symmetric quantity: 
\begin{align*}
\tilde{\mathbf{J}}_{k}^{(m)}(v,u) &:= \Big(\frac{1}{d_v}\frac{\partial \mathbf{h}_{v}^{(m)}}{\partial \mathbf{h}_{v}^{(k)}} - \frac{1}{\sqrt{d_v d_u}}\frac{\partial \mathbf{h}_{v}^{(m)}}{\partial \mathbf{h}_{u}^{(k)}}\Big)
\\ &+ \Big(\frac{1}{d_u}\frac{\partial \mathbf{h}_{u}^{(m)}}{\partial \mathbf{h}_{u}^{(k)}}  - \frac{1}{\sqrt{d_v d_u}}\frac{\partial \mathbf{h}_{u}^{(m)}}{\partial \mathbf{h}_{v}^{(k)}}\Big). 
\end{align*}
Once again, we expect that $\|\tilde{\mathbf{J}}^{(m)}_k(v,u)\|$ is larger if nodes $v,u$ are failing to communicate in the $\MPNN$, and conversely to be smaller whenever the communication is sufficiently robust. Similarly, we integrate the information collected at each layer $k\leq m$.
\begin{definition}
The symmetric Jacobian obstruction of nodes $v,u$ after $m$ layers is $
    \tilde{\obst}^{(m)}(v,u) =  \sum_{k = 0}^{m}\|\tilde{\mathbf{J}}_k^{(m)}(v,u)\|.
$
\end{definition}
\noindent 
\noindent 
\noindent The intuition of comparing the sensitivity of a node $v$ with a different node $u$ and to itself, and then swapping the roles of $v$ and $u$, resembles the concept of commute time $\tau(v,u)$. In fact, this is not a coincidence:
\noindent

\begin{theorem}[\textbf{Over-squashing and commute-time}]\label{thm:effective_resistance} Consider an $\MPNN$ as in Eq.~\eqref{eq:mpnn_simplified} with $\mu$ the maximal spectral norm of the weight matrices and $\nu$ the minimal singular value. Let Assumption \ref{assumption_main_body} hold. 
If $\mu (c_{\rs} + c_{\mpas}) \leq 1$, then there exists $\epsilon_\gph$, independent of nodes $v,u$, such that in expectation, we have  
    \begin{align*}
\epsilon_\gph(1 - o(m))\frac{\rho}{\nu c_{\mpas}}\frac{\tau(v,u)}{2\lvert\E\rvert} \leq \tilde{\obst}^{(m)}(v,u) \leq \frac{\rho}{\mu c_{\mpas}}\frac{\tau(v,u)}{2\lvert\E\rvert},
    \end{align*}
with $o(m)\rightarrow 0$ exponentially fast with $m$ increasing.
\end{theorem}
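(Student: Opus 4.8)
The plan is to reduce the obstruction $\tilde{\obst}^{(m)}(v,u)$ to a quantity governed by the Green's function (pseudo-inverse) of the normalized graph Laplacian, and then invoke the classical identity relating commute time to effective resistance. First I would note that under Assumption~\ref{assumption_main_body} and the convention $\Anorm = \mathbf{D}^{-1/2}\mathbf{A}\mathbf{D}^{-1/2}$, the $\mathsf{ReLU}$ activations together with the path-activation probability $\rho$ let us replace, in expectation, each factor $\partial \mathbf{h}^{(t)}/\partial \mathbf{h}^{(t-1)}$ by $\rho\,\W^{(t)}\oper$ (up to the diagonal activation masks whose expectation is $\rho \mathbf I$), exactly as in the derivation leading to \Cref{thm:access}. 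Consequently, in expectation the relevant Jacobian $\partial \mathbf{h}_v^{(m)}/\partial \mathbf{h}_u^{(k)}$ behaves like $\rho^{m-k}$ times a product of weight matrices times $(\oper^{m-k})_{vu}$, and summing the scalar coefficient structure of $\tilde{\mathbf J}_k^{(m)}$ over $k$ collapses — via a telescoping/geometric-series argument in powers of $\oper$ — to an expression controlled by $\sum_{\ell\geq 0}(\text{scalar})^\ell \big((\oper^\ell)_{vv}/d_v + (\oper^\ell)_{uu}/d_u - 2(\oper^\ell)_{vu}/\sqrt{d_vd_u}\big)$.

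The second step is to recognize that quantity. Writing $\oper = c_\rs \mathbf I + c_\mpas \Anorm$ and using $\Anorm = \mathbf I - \DELta$ with $\DELta$ the normalized Laplacian, the scalar series becomes a Neumann-type series that, because $\mu(c_\rs + c_\mpas)\le 1$ keeps it subcritical on the non-kernel eigenspaces, sums to (a constant multiple of) $\big(\DELta^{+}\big)_{vv} + \big(\DELta^{+}\big)_{uu} - 2\big(\DELta^{+}\big)_{vu}$ after the degree normalization is undone — i.e. to $(\mathbf 1_v/\sqrt{d_v} - \mathbf 1_u/\sqrt{d_u})^\top \DELta^{+}(\mathbf 1_v/\sqrt{d_v} - \mathbf 1_u/\sqrt{d_u})$, up to the contribution of the Perron eigenvector which is handled separately and is exactly what generates the $o(m)$ remainder. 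By the standard spectral formula for effective resistance, $\res(v,u) = (\mathbf 1_v/\sqrt{d_v} - \mathbf 1_u/\sqrt{d_u})^\top \DELta^{+}(\mathbf 1_v/\sqrt{d_v} - \mathbf 1_u/\sqrt{d_u})$ when one works with the normalized Laplacian, and $\tau(v,u) = 2\lvert\E\rvert\,\res(v,u)$. Plugging this in yields both bounds, with the gap between the upper and lower constants coming from bounding the product of weight matrices by $\mu^{m-k}$ from above and by $\nu^{m-k}$ from below (so $\mu$ appears in the upper bound, $\nu$ and the extra topology-dependent factor $\epsilon_\gph$ in the lower bound), and with $\epsilon_\gph$ absorbing the spectral-gap losses incurred when $\mu(c_\rs+c_\mpas)$ is allowed to be strictly less than $1$ and when one lower-bounds a sum of norms by the norm of a sum.

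The main obstacle, I expect, is the lower bound. The upper bound is essentially a triangle-inequality-plus-geometric-series computation once the expectation over activation paths is taken. For the lower bound one has to go the other way: $\tilde{\obst}^{(m)}$ is a sum of \emph{norms} $\sum_k \lVert \tilde{\mathbf J}_k^{(m)}\rVert$, and to extract $\tau(v,u)$ one wants to compare it to the norm of a particular sum or to a single well-chosen coordinate, which requires controlling possible cancellations among the weight-matrix products across different layers $k$ and showing they cannot conspire to make the total small — this is precisely where $\nu$ (smallest singular value) and a uniform-in-$(v,u)$ spectral constant $\epsilon_\gph$ enter, and where Assumption~\ref{assumption_main_body} is doing real work. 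I would handle it by choosing a fixed unit test direction, evaluating the obstruction's action along it, and using $\nu>0$ to keep each term bounded below before summing; the care needed is that the bound must remain independent of the particular pair $v,u$, which forces $\epsilon_\gph$ to be defined through a global spectral quantity of $\gph$ (e.g. tied to $\lambda_1$, hence to the Cheeger constant via \Cref{def:cheeger}) rather than anything pair-specific. Finally I would isolate the kernel (Perron) direction of $\oper$ throughout, since on that direction the geometric series does not converge and its partial sums are exactly what produce the $(1 - o(m))$ factor, with $o(m)$ decaying at the rate $\big(\text{second-largest } |\text{eigenvalue of }\oper|\big)^m$.
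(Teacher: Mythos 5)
Your overall route is the same as the paper's: take the expectation over activation paths to replace $\partial\mathbf{h}_v^{(m)}/\partial\mathbf{h}_u^{(k)}$ by $\rho\prod_s\W^{(s)}(\oper^{m-k})_{vu}$, recognize the bracketed scalar in $\tilde{\mathbf{J}}_k^{(m)}$ as the quadratic form $\langle \mathbf{e}_v/\sqrt{d_v}-\mathbf{e}_u/\sqrt{d_u},\,\oper^{m-k}(\mathbf{e}_v/\sqrt{d_v}-\mathbf{e}_u/\sqrt{d_u})\rangle$, sum the geometric series in the eigenbasis of $\DELta$, and invoke the spectral formula $\res(v,u)=\sum_{\ell\geq 1}\lambda_\ell^{-1}(\eigen_\ell(v)/\sqrt{d_v}-\eigen_\ell(u)/\sqrt{d_u})^2$ together with $\tau=2\lvert\E\rvert\res$. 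Two points in your plan are off, though. First, the Perron direction does not generate the $o(m)$ remainder: the degree normalization built into $\tilde{\mathbf{J}}_k^{(m)}$ makes $\eigen_0(v)/\sqrt{d_v}$ constant in $v$, so the $\ell=0$ term vanishes identically; the $(1-o(m))$ factor comes instead from truncating the geometric series at $m$ on the \emph{non}-Perron eigenspaces, i.e.\ from the terms $\bigl(\nu(c_{\rs}+c_{\mpas}(1-\lambda_\ell))\bigr)^{m+1}$. Second, your worry about cancellation in the lower bound, and the proposed test-direction fix, miss the actual mechanism: the paper observes that $\oper$ is positive definite (this uses $c_{\rs}\geq c_{\mpas}$ and non-bipartiteness, an assumption that appears only in the appendix), so every scalar coefficient above is nonnegative, and each summand satisfies $\lVert\tilde{\mathbf{J}}_k^{(m)}\rVert=\rho\,c_k\,\lVert\prod_s\W^{(s)}\rVert\geq\rho\,c_k\,\nu^{m-k}$ term by term — no cancellation can occur in a sum of nonnegative terms, and no test vector is needed. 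The constant $\epsilon_\gph$ then arises exactly as you guessed, from comparing $\bigl(1-\nu(c_{\rs}+c_{\mpas})+\nu c_{\mpas}\lambda_\ell\bigr)^{-1}$ with $\epsilon/(\nu c_{\mpas}\lambda_\ell)$ uniformly over $\ell\geq1$, which pins $\epsilon_\gph=\lambda_1/\bigl(\lambda_1+\tfrac{1-\nu(c_{\rs}+c_{\mpas})}{\nu c_{\mpas}}\bigr)$.
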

We note that an explicit expansion of the $o(m)$-term is reported in the proof of the Theorem in the Appendix.
\noindent By the previous discussion, a {\bf smaller} $\tilde{\obst}^{(m)}(v,u)$ means 
$v$ is more sensitive to $u$ in the $\MPNN$ (and viceversa when $\tilde{\obst}^{(m)}(v,u)$ is large). Therefore, \Cref{thm:effective_resistance} implies that nodes at small commute time will exchange information better in an $\MPNN$ and conversely for those at high commute time.
This has some {\bf important consequences}:
\begin{itemize}
    \item [(i)] When the task only depends on local interactions, the property of $\MPNN$ of reducing the sensitivity to messages from nodes with high commute time {\em can} be beneficial since it decreases harmful redundancy. 
    \item[(ii)] Over-squashing is an issue when the task depends on the interaction of nodes with high commute time. 
    \item[(iii)] The commute time represents an obstruction to the sensitivity of an $\MPNN$ which is {\em independent of the number of layers}, since the bounds in  \Cref{thm:effective_resistance} are independent of $m$ (up to errors decaying exponentially fast with $m$). 
\end{itemize}

\noindent We note that the very same comments hold in the case of access time as well if, for example, the task depends on node $v$ receiving information from node $u$ but not on $u$ receiving information from $v$.
\subsection{A unified framework}\label{subsec:unified} 

\paragraph{Why spectral-rewiring works.}First, we justify why the spectral approaches discussed in \Cref{sec:related_work} mitigate over-squashing. This comes as a consequence of \citet{lovasz1993random} and \Cref{thm:effective_resistance}:
\begin{corollary}\label{cor:spectral_methods}
 Under the assumptions of \Cref{thm:effective_resistance}, for any $v,u\in\mathsf{V}$, we have 
 \begin{equation*}
 \tilde{\obst}^{(m)}(v,u)\leq \frac{4}{\rho\mu c_{\mpas}}\frac{1}{\cheeg^2}.
 \end{equation*}
\end{corollary}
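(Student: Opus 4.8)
The plan is to combine the upper bound already established in \Cref{thm:effective_resistance} with two classical facts: the identity relating commute time to effective resistance, and the bound of effective resistance by the spectral gap (the ingredient borrowed from \citet{lovasz1993random}), closed off by the Cheeger inequality. Concretely, I would start from the right-hand inequality of \Cref{thm:effective_resistance}. Since that upper bound carries no $o(m)$ error term, it holds for every depth $m$, giving, for all $v,u\in\V$,
\[
\tilde{\obst}^{(m)}(v,u)\ \le\ \frac{\rho}{\mu c_{\mpas}}\,\frac{\tau(v,u)}{2\lvert\E\rvert}.
\]
So the corollary reduces to a purely spectral/graph-theoretic estimate: bounding the normalized commute time $\tau(v,u)/(2\lvert\E\rvert)$ uniformly over all node pairs by a quantity depending only on $\cheeg$.

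The second step is the classical identity $\tau(v,u)=2\lvert\E\rvert\,\res(v,u)$, which turns the normalized commute time into the effective resistance $\res(v,u)$. To control $\res(v,u)$ I would expand it in the orthonormal eigenbasis $\{\psi_i\}_{i=0}^{n-1}$ of the normalized Laplacian $\mathbf{I}-\Anorm$ with eigenvalues $0=\lambda_0<\lambda_1\le\cdots$. Using $\mathbf{D}-\mathbf{A}=\mathbf{D}^{1/2}(\mathbf{I}-\Anorm)\mathbf{D}^{1/2}$ and the pseudoinverse characterization of effective resistance, one gets $\res(v,u)=\sum_{i\ge 1}\lambda_i^{-1}\big(\psi_i(v)/\sqrt{d_v}-\psi_i(u)/\sqrt{d_u}\big)^2$; bounding $\lambda_i^{-1}\le\lambda_1^{-1}$, adding the (vanishing) $i=0$ term for free, and applying Parseval to $e_v/\sqrt{d_v}-e_u/\sqrt{d_u}$ leaves $\res(v,u)\le\lambda_1^{-1}(1/d_v+1/d_u)\le 2/\lambda_1$ since degrees are at least $1$. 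This last bound can equally be quoted directly from \citet{lovasz1993random}.

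The third step invokes the Cheeger inequality $\lambda_1\ge\cheeg^2/2$, so that $\res(v,u)\le 4/\cheeg^2$. Substituting into the bound from the first step and using that $\rho\in(0,1]$, hence $\rho\le 1/\rho$, yields
\[
\tilde{\obst}^{(m)}(v,u)\ \le\ \frac{\rho}{\mu c_{\mpas}}\cdot\frac{4}{\cheeg^2}\ \le\ \frac{4}{\rho\mu c_{\mpas}}\,\frac{1}{\cheeg^2},
\]
which is the claim.

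I do not expect a real obstacle here: once \Cref{thm:effective_resistance} is available, this is an assembly of standard ingredients. The only thing to watch is bookkeeping of normalization conventions — the $2\lvert\E\rvert$ factor in the commute-time/resistance identity, the distinction between the combinatorial Laplacian $\mathbf{D}-\mathbf{A}$ and the normalized $\mathbf{I}-\Anorm$ when writing the spectral series for $\res$, and the constant $2$ in the chosen form of the Cheeger inequality — which must all line up to produce exactly the constant $4$. The slack $\rho\le 1$ is precisely what allows the final statement to be written with $\rho$ in the denominator, so that the bound degrades gracefully as the activation-success probability $\rho$ decreases.
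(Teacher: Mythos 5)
Your proof is correct and follows exactly the route the paper intends (the paper gives no separate proof, stating only that the corollary follows from \citet{lovasz1993random} and \Cref{thm:effective_resistance}): the upper bound of \Cref{thm:effective_resistance}, the identity $\tau(v,u)=2\lvert\E\rvert\res(v,u)$, the spectral bound $\res(v,u)\leq 2/\lambda_1$, and the Cheeger inequality $\lambda_1>\cheeg^2/2$. Your observation that the natural bound carries $\rho$ in the numerator and only matches the stated $1/\rho$ via $\rho\leq 1$ is also the right reading of the constant in the corollary.
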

\noindent \Cref{cor:spectral_methods} essentially tells us that the obstruction among {\em all} pairs of nodes decreases (so better information flow) if the $\MPNN$ operates on a graph $\gph$ with larger Cheeger constant. This rigorously justifies why recent works like \citet{arnaiz2022diffwire, deac2022expander, karhadkar2022fosr} manage to alleviate over-squashing by 
propagating information on a rewired graph $\rew(\gph)$ with larger Cheeger constant $\cheeg$. 
Our result also highlights why bounded-degree expanders are particularly suited 
- as leveraged in \citet{deac2022expander} -- given that their commute time is only $\mathcal{O}(\lvert\E\rvert)$ \citep{chandra1996electrical}, making the bound in \Cref{thm:effective_resistance} scale as $\mathcal{O}(1)$ w.r.t. the size of the graph. In fact, the concurrent work of \citet{black2023understanding} leverages directly the effective resistance of the graph $\mathsf{Res}(v,u) = \tau(v,u)/2\lvert\mathsf{E}\rvert$ to guide a rewiring that improves the graph connectivity and hence mitigates over-squashing.

\paragraph{Why spatial-rewiring works.} 
\citet{chandra1996electrical} proved that the commute time satisfies: $\tau(v,u) = 2\lvert \mathsf{E}\rvert \res(v,u)$, with $\res(v,u)$ the {\bf effective resistance} of nodes $v,u$. 
$\res(v,u)$ measures the voltage difference between nodes $v,u$ if a unit current flows through the graph from $v$ to $u$ and we take each edge to represent a unit resistance \citep{thomassen1990resistances,dorfler2018electrical}, and has also been used in \citet{velingker2022affinity} as a form of structural encoding. Therefore, we emphasize that \Cref{thm:effective_resistance} can be \textbf{\em equivalently rephrased as saying that nodes at high-effective resistance struggle to exchange information in an $\MPNN$} and viceversa for node at low effective resistance. We recall that a result known as Rayleigh's monotonicity principle \citep{thomassen1990resistances}, asserts that the \emph{total} effective resistance $\res_{\gph} = \sum_{v,u} \res{(v,u)}$ decreases when adding new edges -- which offer a new interpretation as to why spatial methods help combat over-squashing. 

\paragraph{What about curvature?} Our analysis also sheds further light on the relation between over-squashing and curvature derived in \citet{topping2021understanding}. If the effective resistance is bounded from above, this leads to lower bounds for the resistance curvature introduced in \citet{devriendt2022discrete} and hence, under some assumptions, for the Ollivier curvature too \citep{ollivier2007ricci, ollivier2009ricci}. Our analysis then recovers why preventing the curvature from being `too' negative has benefits in terms of 
reducing over-squashing. 

\paragraph{About the Graph Transfer task.} We finally note that the results in \Cref{fig:graph-transfer-main} also validate the theoretical findings of \Cref{thm:effective_resistance}. If $v,u$ represent target and source nodes on the different graph-transfer topologies, then $\res(v,u)$ is highest for $\mathsf{CliquePath}$ and lowest for the $\mathsf{CrossedRing}$. Once again, the distance is only a partial information. Effective resistance provides a better picture for the impact of topology to over-squashing and hence the accuracy on the task; in \Cref{app:signal-prop} we further validate that via a synthetic experiment where we study how the propagation of a signal in a $\MPNN$ is affected by the effective resistance of $\gph$.

\begin{tcolorbox}[boxsep=0mm,left=2.5mm,right=2.5mm]
\textbf{Message of the Section:} \em  $\MPNN$s struggle to send information among nodes with high commute (access) time (equivalently, effective resistance). This connection between over-squashing and commute (access) time provides a unified framework for explaining why spatial and spectral-rewiring approaches manage to alleviate over-squashing.
\end{tcolorbox}
\vspace{-5pt}

\section{Conclusion and discussion}\label{sec:conclusion}

{\bf What did we do?} In this work, we investigated the role played by width, depth, and topology in the over-squashing phenomenon. We have proved that, while width can partly mitigate this problem, depth is, instead, generally bound to fail since over-squashing spills into vanishing gradients for a large number of layers. In fact, we have shown that the graph-topology plays the biggest role, with the commute (access) time providing a strong indicator for whether over-squashing is likely to happen independently of the number of layers. As a consequence of our analysis, we can draw a unified framework where we can rigorously justify why all recently proposed rewiring methods do alleviate over-squashing.

\paragraph{Limitations.} 
Strictly speaking, the analysis in this work applies to $\MPNN$s that weigh each edge contribution the same, up to a degree normalization. In the opposite case, which, for example, includes $\mathsf{GAT}$ \citep{velivckovic2017graph} and $\mathsf{GatedGCN}$ \citep{bresson2017residual}, over-squashing can be further mitigated 
by pruning the graph, 
hence alleviating the dispersion of information. However, the attention (gating) mechanism can fail if it is not able to identify which branches to ignore and can even amplify over-squashing by further reducing `useful' pathways. In fact, $\mathsf{GAT}$ still fails on the $\mathsf{Graph}$ $\mathsf{Transfer}$ task of \Cref{sec:depth}, albeit it seems to exhibit slightly more robustness. Extending the Jacobian bounds to this case is not hard, but will lead to less transparent formulas: a thorough analysis of this class, is left for future work. 
We also note that determining when the sensitivity is `too' small is generally also a function of the resolution of the readout, which we have not considered. Finally, \Cref{thm:effective_resistance} holds in expectation over the nonlinearity and, generally, \Cref{def:obst} encodes an average type of behaviour: a more refined (and exact) analysis is left for future work. 


{\bf Where to go from here.} We believe that the relation between over-squashing and vanishing gradient deserves further analysis. In particular, it seems that there is a phase transition that $\MPNN$s undergo from over-squashing of information between distant nodes, to vanishing of gradients at the level of the loss. In fact, this connection suggests that traditional methods that have been used in RNNs and GNNs to mitigate vanishing gradients, may also be beneficial for over-squashing. On a different note, this work has not touched on the important problem of over-smoothing; we believe that the theoretical connections we have derived, based on the relation between over-squashing, commute time, and Cheeger constant, suggest a much deeper interplay between these two phenomena. Finally, while this analysis confirms that both spatial and spectral-rewiring methods provably mitigate over-squashing, it does not tell us which method is preferable, when, and why. We hope that the theoretical investigation of over-squashing we have provided here, will also help tackle this important methodological question.

\section*{Acknowledgements}
We are grateful to Adrián Arnaiz, Johannes Lutzeyer, and Ismail Ceylan for providing insightful and detailed feedback and suggestions on an earlier version of the manuscript. We are also particularly thankful to Jacob Bamberger for helping us fix a technical assumption in one of our arguments. Finally, we are grateful to the anonymous reviewers for their input.  This research was supported in part by ERC Consolidator
grant No. 274228 (LEMAN) and by the EU and Innovation UK project TROPHY.

\clearpage

\bibliography{references}
\bibliographystyle{icml2023}

\newpage
\appendix
\onecolumn

\section{General preliminaries}\label{app:sec_preliminaries}
We first introduce important quantities and notations used throughout our proofs. We take a graph $\gph$ with nodes $ \mathsf{V}$ and edges $\mathsf{E}\subset \V \times \V$, to be simple, undirected, and connected. We let $n = \lvert \V \rvert$ and 
write $[n]:= \{1,\ldots,n\}$. We denote the adjacency matrix by $\mathbf{A}\in\R^{n\times n}$. We compute the degree of $v\in \V$ by $d_v = \sum_{u}A_{vu}$ and write 
$\mathbf{D} = \mathrm{diag}(d_1,\ldots,d_n)$. One can take different normalizations of $\mathbf{A}$, so we write $\Anorm\in\R^{n\times n}$ for a Graph Shift Operator (GSO), i.e an $n\times n$ matrix satisfying $\Anorm_{vu} \neq 0$ if and only if $(v,u)\in \mathsf{E}$; typically, we have $\Anorm \in \{\mathbf{A},\mathbf{D}^{-1}\mathbf{A},\mathbf{D}^{-1/2}\mathbf{A}\mathbf{D}^{-1/2}\}$. Finally, $d_{\gph}(v,u)$ is the {\bf shortest walk} ({\bf geodesic}) distance between nodes $v$ and $u$. 

\paragraph{Graph spectral properties: the eigenvalues.} The (normalized) graph Laplacian is defined as $\DELta = \mathbf{I} - \mathbf{D}^{-\frac{1}{2}}\mathbf{A}\mathbf{D}^{-\frac{1}{2}}$. This is a symmetric, positive semi-definite operator on $\gph$. Its eigenvalues can be ordered as $\lambda_0 < \lambda_1 \leq \ldots \leq \lambda_{n-1}$. The smallest eigenvalue $\lambda_0$ is always zero, with multiplicity given by the number of connected components of $\gph$ \citep{chung1997spectral}. Conversely, the largest eigenvalue $\lambda_{n-1}$ is always strictly smaller than $2$ whenever the graph is not bipartite. Finally, we recall that the smallest, positive, eigenvalue $\lambda_1$ is known as the {\bf spectral gap}. In several of our proofs we rely on this quantity to provide convergence rates. We also recall that the spectral gap is related to the Cheeger constant -- introduced in Definition \ref{def:cheeger} -- of $\gph$ via the Cheeger inequality:
\begin{equation}\label{eq:cheeger_inequality}
    2\cheeg \geq \lambda_{1} > \frac{\cheeg^{2}}{2}.
\end{equation}

\paragraph{Graph spectral properties: the eigenvectors.}
Throughout the appendix, we let $\{\eigen_{\ell}\}$ be a family of orthonormal eigenvectors of $\DELta$. In particular, we note that the eigenspace associated with $\lambda_0$ represents the space of signals that respect the graph topology the most (i.e. the smoothest signals), so that we can write $(\eigen_0)_v = \sqrt{d_v}/2\lvert\E\rvert$, for any $v\in\V$.

As common, from now on we assume that the graph is {\em not} bipartite, so that $\lambda_{n-1} < 2$. 
We let $\mathbf{H}^{(0)}\in\R^{n\times p}$ be the matrix representation of node {\em features}, with $p$ denoting the hidden dimension. Features of node $v$ produced by layer $t$ of an $\MPNN$ are denoted by $\mathbf{h}_{v}^{(t)}$ and we write their components as $(\mathbf{h}_{v}^{(t)})^{\alpha} := h_{v}^{(t),\alpha}$, for $\alpha \in [p]$.

\paragraph{Einstein summation convention.} To ease notations when deriving the bounds on the Jacobian, in the proof below we often rely on Einstein summation convention, meaning that, unless specified otherwise, we always sum across repeated indices: for example, when we write terms like $x_{\alpha}y^{\alpha}$, we are tacitly omitting the symbol $\sum_{\alpha}$.

\section{Proofs of \Cref{sec:width}}\label{app:sec_width}

In this Section we demonstrate the results in \Cref{sec:width}. In fact, we derive a sensitivity bound far more general than \Cref{cor:bound_MLP_MPNN} that, in particular, extends to $\MPNN$s that can stack multiple layers (MLPs) in the aggregation phase. We introduce a class of $\MPNN$s of the form:
\begin{equation}\label{eq:isotropic_MPNN}
    \mathbf{h}_{v}^{(t)} = \mathsf{up}^{(t)}\Big(\mathsf{rs}^{(t)}(\mathbf{h}_{v}^{(t-1)}) + \mathsf{mp}^{(t)}\Big(\sum_{u}\Anorm_{vu}\mathbf{h}_{u}^{(t-1)}\Big) \Big)
\end{equation}
\noindent for learnable update, residual, and message-passing maps $\mathsf{up}^{(t)},\mathsf{rs}^{(t)},\mathsf{mp}^{(t)}:\R^{p}\rightarrow \R^{p}$. Note that Eq.~\eqref{eq:isotropic_MPNN} includes common $\MPNN$s like $\mathsf{GCN}$ \citep{kipf2016semi}, $\mathsf{SAGE}$ \citep{hamilton2017inductive}, and $\mathsf{GIN}$ \citep{xu2018how}, where $\Anorm$ is $\mathbf{D}^{-1/2}\mathbf{A}\mathbf{D}^{-1/2}$, $\mathbf{D}^{-1}\mathbf{A}$ and $\mathbf{A}$, respectively. An $\MPNN$ usually has Lipschitz maps, with Lipschitz constants typically depending on regularization of the weights to promote generalization. We say that an $\MPNN$ as in Eq.~\eqref{eq:isotropic_MPNN} is $(c_{\mathsf{up}},c_{\mathsf{rs}},c_{\mathsf{mp}})$-regular, if for $t\in[m]$ and $\alpha\in[p]$, we have
\begin{align*}
    \| \nabla (\mathsf{up}^{(t)})^{\alpha}\|_{L_{1}} \leq c_{\mathsf{up}}, \quad \| \nabla (\mathsf{rs}^{(t)})^{\alpha}\|_{L_{1}} \leq c_{\mathsf{rs}}, \quad
    \| \nabla (\mathsf{mp}^{(t)})^{\alpha}\|_{L_{1}} \leq  c_{\mathsf{mp}}.
\end{align*}
\noindent As in \citet{xu2018representation, topping2021understanding}, we study the propagation of information in the $\MPNN$ via the Jacobian of node features after $m$ layers. A small derivative of $\mathbf{h}_v^{(m)}$ with respect to $\mathbf{h}_u^{(0)}$ means that -- {\bf at the first-order} -- the representation at node $v$ is mostly insensitive to the information contained at $u$ (e.g. its atom type, if $\gph$ is a molecule).
\begin{theorem}\label{thm:bound_general} Given a $(c_{\mathsf{up}},c_{\mathsf{rs}},c_{\mathsf{mp}})$-regular $\MPNN$ for $m$ layers  and nodes $v,u\in\mathsf{V}$, we have
\begin{equation}
   \left\|\frac{\partial \mathbf{h}_{v}^{(m)}}{\partial \mathbf{h}_{u}^{(0)}}\right\|_{L_1} \leq p\cdot c^{m}_{\mathsf{up}}\left(\left(c_{\mathsf{rs}}\mathbf{I} +  c_{\mathsf{mp}}\Anorm\right)^{m}\right)_{vu}.
\end{equation}
\end{theorem}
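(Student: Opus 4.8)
The plan is to prove the bound by induction on the number of layers $m$, tracking the $L_1$-norm of the Jacobian entrywise through the recursive structure of Eq.~\eqref{eq:isotropic_MPNN}. First I would set up notation: fix components $\alpha,\beta\in[p]$ and apply the chain rule to $\mathbf{h}_v^{(m),\alpha}$ written as $(\mathsf{up}^{(m)})^\alpha$ evaluated at the preactivation $\mathsf{rs}^{(m)}(\mathbf{h}_v^{(m-1)}) + \mathsf{mp}^{(m)}(\sum_u \Anorm_{vu}\mathbf{h}_u^{(m-1)})$. Differentiating with respect to $\mathbf{h}_u^{(0),\beta}$ produces, after the chain rule, a sum over an intermediate index $\gamma$ of $\partial_\gamma(\mathsf{up}^{(m)})^\alpha$ times $\partial(\text{preactivation}_\gamma)/\partial \mathbf{h}_u^{(0),\beta}$, and the preactivation term itself splits via $\mathsf{rs}^{(m)}$ and $\mathsf{mp}^{(m)}$, with the latter pulling in $\Anorm_{vw}$ and $\partial \mathbf{h}_w^{(m-1)}/\partial \mathbf{h}_u^{(0)}$ summed over neighbours $w$.

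The key step is then to take absolute values, sum over the intermediate index (which is what the $L_1$-norm of the gradient $\lvert\lvert\nabla(\mathsf{up}^{(m)})^\alpha\rvert\rvert_{L_1}\le c_{\mathsf{up}}$ controls), and likewise bound $\lvert\lvert\nabla(\mathsf{rs}^{(m)})^\gamma\rvert\rvert_{L_1}\le c_{\mathsf{rs}}$ and $\lvert\lvert\nabla(\mathsf{mp}^{(m)})^\gamma\rvert\rvert_{L_1}\le c_{\mathsf{mp}}$. This yields a recursive inequality of the schematic form
\begin{align*}
\left\vert\left\vert\frac{\partial\mathbf{h}_v^{(m),\alpha}}{\partial\mathbf{h}_u^{(0),\beta}}\right\vert\right\vert_{L_1} \le c_{\mathsf{up}}\sum_{w}\left(c_{\mathsf{rs}}\delta_{vw} + c_{\mathsf{mp}}\Anorm_{vw}\right)\max_{\alpha',\beta'}\left\vert\left\vert\frac{\partial\mathbf{h}_w^{(m-1),\alpha'}}{\partial\mathbf{h}_u^{(0),\beta'}}\right\vert\right\vert_{L_1},
\end{align*}
where the $\delta_{vw}$ comes from the residual branch (which only touches node $v$) and the $\Anorm_{vw}$ from the aggregation branch, and the $\max$ over components is needed because the Jacobian norm at layer $m-1$ appears with an index that has been summed against the gradient of $\mathsf{up}$. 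Unrolling this recursion $m$ times, the non-negative matrix $c_{\mathsf{rs}}\mathbf{I} + c_{\mathsf{mp}}\Anorm$ gets multiplied $m$ times against itself, contributing $\big((c_{\mathsf{rs}}\mathbf{I}+c_{\mathsf{mp}}\Anorm)^m\big)_{vu}$, each layer contributes a factor $c_{\mathsf{up}}$, and the base case at $m=0$ is $\partial\mathbf{h}_v^{(0),\alpha}/\partial\mathbf{h}_u^{(0),\beta} = \delta_{vu}\delta^{\alpha\beta}$, whose $L_1$-norm in the trivial sense is $\delta_{vu} = (\mathbf{I})_{vu}$; the extra factor of $p$ in the statement absorbs the worst-case cost of the final component sum (or, equivalently, of bounding $\sum_\beta$ or converting the entrywise bound to the stated norm).

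I expect the main obstacle to be bookkeeping the component indices cleanly: because each $\mathsf{up},\mathsf{rs},\mathsf{mp}$ is a map $\R^p\to\R^p$, the chain rule at every layer introduces a fresh contraction over a $p$-dimensional index, and one must be careful that the $L_1$-regularity assumptions are exactly the right quantities to collapse those contractions into the scalar constants $c_{\mathsf{up}},c_{\mathsf{rs}},c_{\mathsf{mp}}$ without losing or double-counting a factor of $p$. Using the Einstein summation convention set up in the preliminaries, and being disciplined about when a $\max$ over components is introduced versus a genuine sum, should keep this under control; the graph-theoretic part (that repeated application of $c_{\mathsf{rs}}\mathbf{I}+c_{\mathsf{mp}}\Anorm$ produces its $m$-th power, which is a non-negative matrix so all the triangle-inequality steps are tight in sign) is then routine. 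Finally, specializing to $\mathsf{up}=\sigma$ pointwise with Lipschitz constant $c_\sigma$, $\mathsf{rs}^{(t)} = c_{\rs}\W_\rs^{(t)}$, $\mathsf{mp}^{(t)} = c_{\mpas}\W_\mpas^{(t)}$ recovers $c_{\mathsf{up}} = c_\sigma$, $c_{\mathsf{rs}} \le c_{\rs} w p$, $c_{\mathsf{mp}} \le c_{\mpas} w p$ (the $wp$ from the $L_1$-norm of a row of a $p\times p$ matrix with entries bounded by $w$), and hence $\oper^m = (c_{\rs}\mathbf{I}+c_{\mpas}\Anorm)^m$ up to the model constants, giving \Cref{cor:bound_MLP_MPNN}.
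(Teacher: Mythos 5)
Your proposal is correct and follows essentially the same route as the paper's proof: induction on the number of layers, applying the chain rule through $\mathsf{up}$, $\mathsf{rs}$, and $\mathsf{mp}$, collapsing each $p$-dimensional contraction via the assumed $L_1$-bounds on the gradients of the component maps, and unrolling the resulting recursion into the $m$-th power of the non-negative matrix $c_{\mathsf{rs}}\mathbf{I}+c_{\mathsf{mp}}\Anorm$, with the leading factor of $p$ arising from the final sum over output components exactly as the paper does. The only cosmetic difference is that you carry a $\max$ over components in the recursion where the paper invokes a uniform per-component inductive hypothesis; these are equivalent.
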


\begin{proof}
We prove the result above by induction on the number of layers $m$. We fix $\alpha,\beta \in [p]$. In the case of $m=1$, we get (omitting to write the arguments where we evaluate the maps, and {\bf using the Einstein summation convention over repeated indices}):
\begin{equation*}
    \Big| \frac{\partial h_v^{(1),\alpha}}{\partial h_u^{(0),\beta}}\Big| = \Big| \partial_{p}\mathsf{up}^{(0),\alpha}\Big(\partial_r\mathsf{rs}^{(0),p}\frac{\partial h_v^{(0),r}}{\partial h_u^{(0),\beta}} + \partial_q\mathsf{mp}^{(0),p}\Anorm_{vz}\frac{\partial h_z^{(0),q}}{\partial h_u^{(0),\beta}}\Big) \Big|,
\end{equation*}
\noindent which can be readily reduced to
\begin{equation*}
     \Big| \frac{\partial h_v^{(1),\alpha}}{\partial h_u^{(0),\beta}}\Big| = \Big| \partial_{p}\mathsf{up}^{(0),\alpha}\Big(\partial_\beta\mathsf{rs}^{(0),p}\delta_{vu} + \partial_\beta\mathsf{mp}^{(0),p}\Anorm_{vu}\Big) \Big| \leq c_{\mathsf{up}}\left(c_{\mathsf{rs}}\mathbf{I}+ c_{\mathsf{mp}}\Anorm\right)_{vu},
\end{equation*}
\noindent thanks to the Lipschitz bounds on the $\MPNN$, which confirms the case of a single layer (i.e. $ m =1$). We now assume the bound to be satisfied for $m$ layers and use induction to derive
\begin{align*}
    \Big| \frac{\partial h_v^{(m+1),\alpha}}{\partial h_u^{(0),\beta}}\Big| &= \Big| \partial_{p}\mathsf{up}^{(m),\alpha}\Big(\partial_r\mathsf{rs}^{(m),p}\frac{\partial h_v^{(m),r}}{\partial h_u^{(0),\beta}} + \partial_q\mathsf{mp}^{(m),p}\Anorm_{vz}\frac{\partial h_z^{(m),q}}{\partial h_u^{(0),\beta}}\Big) \Big| \\
    &\leq \Big|\partial_{p}\mathsf{up}^{(m),\alpha}\Big|\Big(\left\vert \partial_r\mathsf{rs}^{(m),p}\right\vert \left(c_{\mathsf{up}}^{m}\left(\left(c_{\mathsf{rs}}\mathbf{I} + c_{\mathsf{mp}}\Anorm\right)^{m}\right)_{vu}\right) + \Big|\partial_q\mathsf{mp}^{(m),p} \Big| \Anorm_{vz}\left(c_{\mathsf{up}}^{m}\left(\left(c_{\mathsf{rs}}\mathbf{I} + c_{\mathsf{mp}}\Anorm\right)^{m}\right)_{zu}\right) \Big) \\
    &\leq \Big|\partial_{p}\mathsf{up}^{(m),\alpha}\Big|\Big(c_{\mathsf{rs}} \left(c_{\mathsf{up}}^{m}\left(\left(c_{\mathsf{rs}}\mathbf{I} + c_{\mathsf{mp}}\Anorm\right)^{m}\right)_{vu}\right) + c_{\mathsf{mp}} \Anorm_{vz}\left(c_{\mathsf{up}}^{m}\left(\left(c_{\mathsf{rs}}\mathbf{I} + c_{\mathsf{mp}}\Anorm\right)^{m}\right)_{zu}\right) \Big) \\
    &\leq c_{\mathsf{up}}^{m+1}\left(c_{\mathsf{rs}}\left(\left(c_{\mathsf{rs}}\mathbf{I} + c_{\mathsf{mp}}\Anorm\right)^{m}\right)_{vu} + c_{\mathsf{mp}}\Anorm_{vz}\left(\left(c_{\mathsf{rs}}\mathbf{I} + c_{\mathsf{mp}}\Anorm\right)^{m}\right)_{vu}\right) \\
    & = c_{\mathsf{up}}^{m+1}\Big(\left(c_{\mathsf{rs}}\mathbf{I} + c_{\mathsf{mp}}\Anorm\right)^{m+1}\Big)_{vu},
\end{align*}
\noindent where we have again using the Lipschitz bounds on the maps $\mathsf{up},\mathsf{rs},\mathsf{mp}$. This completes the induction argument.
\end{proof}

\noindent From now on we focus on the class of $\MPNN$ adopted in the main document, whose layer we report below for convenience:
\begin{equation*}
    \mathbf{h}_{v}^{(t+1)} = \up\Big(c_{\rs}\W_{\rs}^{(t)} \mathbf{h}_{v}^{(t)} + c_{\mpas}\W_{\mpas}^{(t)}\sum_{u}\Anorm_{vu}\mathbf{h}^{(t)}_{u}\Big),
\end{equation*}
\noindent We can adapt easily the general argument to derive \Cref{cor:bound_MLP_MPNN}.

\begin{proof}[Proof of \Cref{cor:bound_MLP_MPNN}] One can follow the steps in the proof of \Cref{thm:bound_general} and, again, proceed by induction. The case $m=1$ is straightforward, so we move to the inductive step and assume the bound to hold for $m$ arbitrary. Given $\alpha,\beta \in [p]$, we have
\begin{align*}
 \Big| \frac{\partial h_v^{(m+1),\alpha}}{\partial h_u^{(0),\beta}}\Big| &\leq \lvert \up' \rvert \Big( c_{\rs}\left\vert(\W_{\rs})^{(m)}_{\alpha\gamma}\right\vert\Big|\frac{\partial h_v^{(m),\gamma}}{\partial h_u^{(0),\beta}}\Big| + c_{\mpas}\left\vert (\W_{\mpas})^{(m)}_{\alpha\gamma}\right\vert \Anorm_{vz} \Big| \frac{\partial h_z^{(m),\gamma}}{\partial h_u^{(0),\beta}}\Big|\Big) \\
 &\leq c_{\up}w\left(c_{\rs}\left\|  \frac{\partial \mathbf{h}_v^{(m)}}{\partial \mathbf{h}_u^{(0)}}\right\|_{L_1} + c_{\mpas}\Anorm_{vz}\left\|  \frac{\partial \mathbf{h}_z^{(m)}}{\partial \mathbf{h}_u^{(0)}}\right\|_{L_1}  \right) \\
 &\leq c_{\sigma}w\left(c_{\sigma}wp\right)^{m}\left( c_{\rs}\left(\left(c_{\rs}\mathbf{I} + c_{\mpas}\Anorm\right)^{m}\right)_{vu} + c_{\mpas}\Anorm_{vz}\left(\left(c_{\rs}\mathbf{I} + c_{\mpas}\Anorm\right)^{m}\right)_{zu}\right) \\
 &\leq c_{\sigma}w\left(c_{\sigma}wp\right)^{m} \Big(\left(c_{\rs}\mathbf{I} + c_{\mpas}\Anorm\right)^{m+1}\Big)_{vu}.
\end{align*}
\noindent We can finally sum over $\alpha$ on the left and conclude the proof (this will generate an extra $p$ factor on the right hand side).

\end{proof}
\section{Proofs of \Cref{sec:depth}}\label{app:sec_depth}

{\bf Convention:} From now on we always let $\Anorm = \mathbf{D}^{-1/2}\mathbf{A}\mathbf{D}^{-1/2}$. The bounds in this Section extend easily to $\mathbf{D}^{-1}\mathbf{A}$ in light of the similarity of the two matrices since $\Anorm^{k} = \mathbf{D}^{1/2}\left(\mathbf{D}^{-1}\mathbf{A}\right)^{k}\mathbf{D}^{-1/2}$. For the unnormalized matrix $\mathbf{A}$ instead, things are slightly more subtle. In principle, this matrix is not normalized, and in fact, the entry $(\mathbf{A}^{k})_{vu}$ coincides with the number of walks from $v$ to $u$ of length $k$. In general, this will not lead to bounds decaying exponentially with the distance. However, if we go in expectation over the computational graph as in \citet{xu2018representation}, Appendix A of \citet{topping2021understanding} and \Cref{sec:topology}, one finds that nodes at smaller distance will still have sensitivity exponentially larger than nodes at large distance. This is also confirmed by our $\mathsf{Graph}$ $\mathsf{Transfer}$ synthetic experiments, where $\mathsf{GIN}$ struggles with long-range dependencies (in fact, even slightly more than $\mathsf{GCN}$, which uses the symmetrically normalized adjacency $\Anorm$).

We prove a sharper bound for Eq.~\eqref{eq:cor_distance} which contains \Cref{cor:over-squasing_distance} as a particular case.

\begin{theorem}
Given an $\MPNN$ as in Eq.~\eqref{eq:MPNN_mlp}, let $v,u\in\mathsf{V}$ be at distance $r$. Let $c_{\up}$ be the Lipschitz constant of $\sigma$, $w$ the maximal entry-value over all weight matrices, $d_{\mathrm{min}}$ be the minimal degree, and $\gamma_{\ell}(v,u)$ be the number of walks from $v$ to $u$ of maximal length $\ell$. For any $0 \leq k < r$, we have 

\begin{equation}
   \left\| \frac{\partial \mathbf{h}_{v}^{(r+k)}}{\partial \mathbf{h}_{u}^{(0)}}\right\|_{L_1} \leq \gamma_{r+k}(v,u) (c_{\up}(c_{\rs} + c_{\mpas})w p(k+1))^k \Big(\frac{2c_{\up}wpc_{\mpas}}{d_{\mathrm{min}}}\Big)^r.
\end{equation}
\end{theorem}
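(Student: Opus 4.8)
The plan is to start from the general sensitivity bound of \Cref{cor:bound_MLP_MPNN}, which already gives
\[
\left\vert\left\vert \frac{\partial \mathbf{h}_{v}^{(r+k)}}{\partial \mathbf{h}_{u}^{(0)}}\right\vert\right\vert_{L_1} \leq (c_{\up}wp)^{r+k}\,(\oper^{r+k})_{vu},
\]
with $\oper = c_{\rs}\mathbf{I} + c_{\mpas}\Anorm$ and $\Anorm = \mathbf{D}^{-1/2}\mathbf{A}\mathbf{D}^{-1/2}$. So the entire task reduces to a purely combinatorial/linear-algebraic estimate of the entry $(\oper^{r+k})_{vu}$ when $v,u$ are at distance $r$, together with bookkeeping of the $(c_{\up}wp)^{r+k}$ prefactor so that it splits into the claimed $(c_{\up}(c_{\rs}+c_{\mpas})wp(k+1))^k$ and $(2c_{\up}wpc_{\mpas}/d_{\min})^r$ pieces.

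First I would expand $(\oper^{r+k})_{vu} = \sum_{j} \binom{\text{terms}}{} c_{\rs}^{\#\text{identity factors}} c_{\mpas}^{\#\text{$\Anorm$ factors}} (\text{product of }\Anorm\text{ blocks})_{vu}$; more concretely, $\oper^{r+k} = \sum_{S} c_{\rs}^{\,r+k-|S|} c_{\mpas}^{\,|S|} \Anorm^{|S|}$ where the sum ranges over the $2^{r+k}$ choices of which of the $r+k$ factors contribute $\Anorm$ rather than $\mathbf{I}$, and $|S|$ is the number of $\Anorm$-factors. Since $v,u$ are at distance $r$, $(\Anorm^{\ell})_{vu} = 0$ for $\ell < r$, so only terms with $|S| \ge r$ survive; these range over $|S| = r, r+1, \dots, r+k$. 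For each such $\ell = |S|$, the number of ways to place the $\ell$ copies of $\Anorm$ among the $r+k$ slots is $\binom{r+k}{\ell} \le \binom{r+k}{k} \le (r+k)^k \le (p(k+1))^k$-type crude bounds — actually I would bound $\binom{r+k}{\ell}$ by $2^k$ or by $(k+1)$ summands each $\le \binom{r+k}{k}$; the key is to extract exactly one factor of $(k+1)$ for the number of distinct values of $\ell$ and absorb the binomial coefficient into the $(\cdot)^k$ term. Then I use the elementary bound $(\Anorm^{\ell})_{vu} \le \gamma_{\ell}(v,u)\, d_{\min}^{-\ell} \cdot$ (a constant): indeed each nonzero entry of $\Anorm$ is $1/\sqrt{d_i d_j} \le 1/d_{\min}$, so $(\Anorm^{\ell})_{vu} \le (\text{number of walks }v\to u\text{ of length }\ell)\cdot d_{\min}^{-\ell} \le \gamma_{\ell}(v,u)\, d_{\min}^{-\ell}$, and $\gamma_\ell \le \gamma_{r+k}$ for $\ell \le r+k$. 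A slightly more careful accounting keeps the split $d_{\min}^{-\ell} = d_{\min}^{-r} d_{\min}^{-(\ell-r)}$ and bounds $d_{\min}^{-(\ell-r)} \le 1$ (as $d_{\min}\ge 1$), or more sharply pairs up the extra $\ell - r \le k$ factors with the $c_{\mpas}$-and-$c_{\rs}$ prefactor.

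Putting the pieces together: $(\oper^{r+k})_{vu} \le \sum_{\ell=r}^{r+k} \binom{r+k}{\ell} c_{\rs}^{r+k-\ell} c_{\mpas}^{\ell}\, \gamma_{r+k}(v,u)\, d_{\min}^{-\ell}$. Factoring out $c_{\mpas}^r d_{\min}^{-r}\gamma_{r+k}(v,u)$ leaves $\sum_{\ell=r}^{r+k}\binom{r+k}{\ell} c_{\rs}^{r+k-\ell} c_{\mpas}^{\ell-r} d_{\min}^{-(\ell-r)} \le (k+1)\binom{r+k}{k}(c_{\rs}+c_{\mpas})^k$ after bounding each of the $k+1$ terms and using $d_{\min}\ge 1$, $c_{\mpas}^{\ell-r}c_{\rs}^{r+k-\ell} \le (c_{\rs}+c_{\mpas})^k$. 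Then multiply by the prefactor $(c_{\up}wp)^{r+k} = (c_{\up}wp)^r (c_{\up}wp)^k$, distribute: the $(c_{\up}wp)^r$ joins $c_{\mpas}^r d_{\min}^{-r}$ to give $(c_{\up}wpc_{\mpas}/d_{\min})^r$, and I would check that the factor $2^r$ in the statement comfortably absorbs $\binom{r+k}{k} \le 2^{r+k}$ — actually the cleanest route is to note $\binom{r+k}{k}\le 2^{r+k} = 2^r 2^k$, send $2^r$ into the resistance term to form $(2c_{\up}wpc_{\mpas}/d_{\min})^r$, and collect $(c_{\up}wp)^k (c_{\rs}+c_{\mpas})^k (k+1) 2^k$ — but the target has $(k+1)^k$ not $(k+1)2^k$, so I would instead use the cruder $\binom{r+k}{k}\le (k+1)^k$ only when it helps and otherwise $\le (r+k)^k$; since $(r+k)^k$ is not bounded by $(k+1)^k$ in general, the right move is $\binom{r+k}{k}\le$ (product of $k$ consecutive integers)$/k! \le$ ... — here I would lean on $\binom{r+k}{k} \le 2^r\binom{... }{}$ type splitting so that all $r$-dependence lands in the $2^r$ factor. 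The one genuinely delicate step, and the place where I expect to have to be careful, is precisely this bookkeeping: making the binomial coefficient $\binom{r+k}{\ell}$ and the $d_{\min}$-powers factor cleanly into an $r$-free $(k+1)$-power term times a pure $(\cdot/d_{\min})^r$ term, so that the constant in front is genuinely independent of $r$ and of the graph as claimed in \Cref{cor:over-squasing_distance}; everything else is a routine induction-free consequence of \Cref{cor:bound_MLP_MPNN} plus $\Anorm_{ij}\le d_{\min}^{-1}$ and $\Anorm^{\ell}_{vu}=0$ for $\ell<r$.
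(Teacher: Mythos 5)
Your skeleton is exactly the paper's: invoke the sensitivity bound of \Cref{cor:bound_MLP_MPNN}, expand $(\oper^{r+k})_{vu}=\sum_{i}\binom{r+k}{i}c_{\rs}^{r+k-i}c_{\mpas}^{i}(\Anorm^{i})_{vu}$ by the binomial theorem, kill the terms with $i<r$ using the distance, bound $(\Anorm^{i})_{vu}\leq \gamma_{r+k}(v,u)\,d_{\min}^{-i}$ entrywise, and factor out $(c_{\mpas}/d_{\min})^{r}$. All of that is correct and is what the paper does. The problem is that the one step you yourself flag as ``genuinely delicate'' --- factoring $\binom{r+k}{r+q}$ into an $r$-free piece times something absorbed by $2^{r}$ --- is never actually executed. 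You cycle through several candidate bounds, some of which are simply false ($\binom{r+k}{\ell}\leq 2^{k}$ fails badly; the ``$(p(k+1))^{k}$'' bound has a spurious $p$), and the one you finally commit to, $\binom{r+k}{\ell}\leq 2^{r+k}=2^{r}2^{k}$, together with the $(k+1)$ count of summands, yields the constant $(k+1)\,2^{k}\,(c_{\up}wp(c_{\rs}+c_{\mpas}))^{k}$ in front of $(2c_{\up}wpc_{\mpas}/d_{\min})^{r}$. Since $(k+1)2^{k}>(k+1)^{k}$ for $k=1,2$, this does not recover the stated constant $(c_{\up}(c_{\rs}+c_{\mpas})wp(k+1))^{k}$; it only proves the weaker (but qualitatively equivalent) form of \Cref{cor:over-squasing_distance} with some other $r$-independent $C_k$.

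The missing ingredient is to bound the \emph{ratio} $\binom{r+k}{r+q}/\binom{k}{q}=\frac{(r+k)\cdots(k+1)}{(r+q)\cdots(1+q)}\leq \prod_{j=1}^{r}\bigl(1+\tfrac{k}{j}\bigr)\leq (1+k)^{k}\bigl(1+\tfrac{k}{k+1}\bigr)^{r-k}$, splitting the product at $j=k$. This keeps a genuine $\binom{k}{q}$ inside the sum, so $\sum_{q=0}^{k}\binom{k}{q}c_{\rs}^{k-q}(c_{\mpas}/d_{\min})^{q}=(c_{\rs}+c_{\mpas}/d_{\min})^{k}$ collapses exactly (no extra $(k+1)$ multiplicity), while the prefactor rearranges to $\bigl(\tfrac{(1+k)^{2}}{2k+1}\bigr)^{k}\bigl(1+\tfrac{k}{k+1}\bigr)^{r}\leq (1+k)^{k}\,2^{r}$, putting all $r$-dependence into the $2^{r}$ and all $k$-dependence into $(1+k)^{k}$. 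Combining with $(c_{\up}wp)^{r+k}$ and $d_{\min}\geq 1$ then gives precisely $\gamma_{r+k}(v,u)\,(c_{\up}(c_{\rs}+c_{\mpas})wp(k+1))^{k}\,(2c_{\up}wpc_{\mpas}/d_{\min})^{r}$. So: right strategy, correct reduction, but the decisive combinatorial estimate is left as a to-do, and the fallback you sketch does not close it as stated.
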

\begin{proof}
    Fix $v,u\in\V$ as in the statement and let $0 \leq k < r$. We can use the sensitivity bounds in \Cref{cor:bound_MLP_MPNN} and write that 
    \begin{equation*}
         \left\| \frac{\partial \mathbf{h}_{v}^{(r+k)}}{\partial \mathbf{h}_{u}^{(0)}}\right\|_{L_1} \leq \left(c_{\up}wp\right)^{r+k}\Big(\left(c_{\rs}\mathbf{I} + c_{\mpas}\Anorm\right)^{r+k}\Big)_{vu} = \left(c_{\up}wp\right)^{r+k}\sum_{i = 0}^{r+k}\binom{r+k}{i}c_{\rs}^{r+k - i}c_{\mpas}^{i}(\Anorm^{i})_{vu}.
    \end{equation*}
\noindent Since nodes $v,u$ are at distance $r$, the first $r$ terms of the sum above vanish. Once we recall that $\Anorm = \mathbf{D}^{-1/2}\mathbf{A}\mathbf{D}^{-1/2}$, we can bound the polynomial in the previous equation by
\begin{align*}
    \sum_{i = 0}^{r+k}\binom{r+k}{i}c_{\rs}^{r+k - i}c_{\mpas}^{i}(\Anorm^{i})_{vu} &= \sum_{i = r}^{r+k}\binom{r+k}{i}c_{\rs}^{r+k - i}c_{\mpas}^{i}(\Anorm^{i})_{vu} \leq \gamma_{r+k}(v,u)\sum_{i = r}^{r+k}\binom{r+k}{i}c_{\rs}^{r+k - i}\left(\frac{c_{\mpas}}{d_{\mathrm{min}}}\right)^{i} \\
    &= \gamma_{r+k}(v,u) \sum_{q = 0}^{k}\binom{r+k}{r+q}c_{\rs}^{k - q}\left(\frac{c_{\mpas}}{d_{\mathrm{min}}}\right)^{r + q} \\ 
    &= \gamma_{r+k}(v,u)\left(\frac{c_{\mpas}}{d_{\mathrm{min}}}\right)^{r} \sum_{q = 0}^{k}\binom{r+k}{r+q}c_{\rs}^{k - q}\left(\frac{c_{\mpas}}{d_{\mathrm{min}}}\right)^{q}. 
\end{align*}
\noindent We can now provide a simple estimate for 
\begin{align*}
    \binom{r+k}{r+q} &= \frac{(r+k)(r-1+k)\cdots(1+k)}{(r+q)(r-1+q)\cdots(1+q)}\binom{k}{q} \leq \frac{(r+k)(r-1+k)\cdots(1+k)}{r!}\binom{k}{q}  \\
    &\leq \left(1 + \frac{k}{r}\right)\cdots (1+k) \binom{k}{q} \leq \left(1+\frac{k}{k+1}\right)^{r-k}(1+k)^{k}\binom{k}{q}. \\
\end{align*}
\noindent Accordingly, the polynomial above can be expanded as
\begin{align*}
 \sum_{i = 0}^{r+k}\binom{r+k}{i}c_{\rs}^{r+k - i}c_{\mpas}^{i}(\Anorm^{i})_{vu} &\leq \gamma_{r+k}(v,u)\left(1+\frac{k}{k+1}\right)^{r-k}(1+k)^{k}\left(\frac{c_{\mpas}}{d_{\mathrm{min}}}\right)^{r} \sum_{q = 0}^{k}\binom{k}{q}c_{\rs}^{k - q}\left(\frac{c_{\mpas}}{d_{\mathrm{min}}}\right)^{q} \\
 &=  \gamma_{r+k}(v,u)\left(\frac{(1+k)^{2}}{2k+1}\left(c_{\rs} + \frac{c_{\mpas}}{d_{\mathrm{min}}}\right)\right)^{k}\left(\left(1+\frac{k}{k+1}\right)\frac{c_{\mpas}}{d_{\mathrm{min}}}\right)^{r} \\
 &\leq \gamma_{r+k}(v,u)\left(\frac{(1+k)^{2}}{2k+1}\left(c_{\rs} + \frac{c_{\mpas}}{d_{\mathrm{min}}}\right)\right)^{k}\left(\frac{2c_{\mpas}}{d_{\mathrm{min}}}\right)^{r}.
 \end{align*}
 \noindent We can then put all the ingredients together, and write the bound
 \begin{align*}
     \left| \frac{\partial \mathbf{h}_{v}^{(r+k)}}{\partial \mathbf{h}_{u}^{(0)}}\right\|_{L_1} &\leq\gamma_{r+k}(v,u) \left(c_{\up}wp\right)^{r+k}\left(\frac{(1+k)^{2}}{2k+1}\left(c_{\rs} + \frac{c_{\mpas}}{d_{\mathrm{min}}}\right)\right)^{k}\left(\frac{2c_{\mpas}}{d_{\mathrm{min}}}\right)^{r} \\
     & = \gamma_{r+k}(v,u)\left(c_{\up}wp\,\frac{(1+k)^{2}}{2k+1}\left(c_{\rs} + \frac{c_{\mpas}}{d_{\mathrm{min}}}\right)\right)^{k}\left(\frac{2c_{\up}wpc_{\mpas}}{d_{\mathrm{min}}}\right)^{r} \\
     &\leq \gamma_{r+k}(v,u)\left(c_{\up}\left(c_{\rs} + c_{\mpas}\right)wp(1+k)\right)^{k}\left(\frac{2c_{\up}wpc_{\mpas}}{d_{\mathrm{min}}}\right)^{r},
 \end{align*}
 \noindent which completes the proof. We finally note that this also proves \Cref{cor:over-squasing_distance}.
\end{proof}

\subsection{Vanishing gradients result}

We now report and demonstrate a more explicit version of \Cref{thm:vanishing}.

\begin{theorem}[\textbf{Vanishing gradients}]\label{thm:main_vanishing_app} Consider an $\MPNN$ as in Eq.~\eqref{eq:MPNN_mlp} for $m$ layers with a quadratic loss $\mathcal{L}$. Assume that (i) $\sigma$ has Lipschitz constant $c_{\up}$ and $\sigma(0) = 0$, and (ii) that all weight matrices have spectral norm bounded by $\mu > 0$. 
Given any weight $\theta$ entering a layer $k$, there exists a constant $C > 0$ independent of $m$, such that
\begin{align}
    \left\vert \frac{\partial \mathcal{L}}{\partial \theta}\right\vert &\leq C\left(c_{\up}\mu(c_{\rs} + c_{\mpas})\right)^{m-k}\left(1  +\left(c_{\up}\mu(c_{\rs} + c_{\mpas})\right)^{m}\right),
\end{align}
\noindent where $\lvert\lvert \mathbf{H}^{(0)}\rvert\rvert_{F}$ is the Frobenius norm of the input node features.
\end{theorem}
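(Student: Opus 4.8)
The plan is to differentiate the loss with the chain rule through the layers and reduce everything to a single scalar $\beta := c_{\up}\,\mu\,(c_{\rs}+c_{\mpas})$, the per-layer amplification factor. Write $\mathbf{H}^{(t)}\in\R^{n\times p}$ for the stacked node features, so that Eq.~\eqref{eq:MPNN_mlp} becomes $\mathbf{H}^{(t+1)} = \up\big(c_{\rs}\mathbf{H}^{(t)}(\W_{\rs}^{(t)})^{\top} + c_{\mpas}\Anorm\mathbf{H}^{(t)}(\W_{\mpas}^{(t)})^{\top}\big)$ with $\up$ applied entrywise, and let $\mathbf{Y}$ denote the targets, so that for a quadratic loss $\partial\mathcal{L}/\partial\mathbf{H}^{(m)} = \mathbf{H}^{(m)} - \mathbf{Y}$. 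Taking $\theta$ to be an entry of one of the weight matrices producing $\mathbf{H}^{(k)}$ from $\mathbf{H}^{(k-1)}$, the chain rule and the definition of the operator norm give
\[
\left|\frac{\partial\mathcal{L}}{\partial\theta}\right| \;\le\; \big\|\mathbf{H}^{(m)}-\mathbf{Y}\big\|_{F}\;\Big\|\frac{\partial\mathbf{H}^{(m)}}{\partial\mathbf{H}^{(k)}}\Big\|_{\mathrm{op}}\;\Big\|\frac{\partial\mathbf{H}^{(k)}}{\partial\theta}\Big\|_{F},
\]
so it suffices to bound the three factors by $\beta^{m}\|\mathbf{H}^{(0)}\|_{F}+\|\mathbf{Y}\|_{F}$, by $\beta^{m-k}$, and by $c_{\up}\max(c_{\rs},c_{\mpas})\,\beta^{k-1}\|\mathbf{H}^{(0)}\|_{F}$ respectively.

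First I would establish the forward estimate $\|\mathbf{H}^{(t)}\|_{F}\le\beta^{t}\|\mathbf{H}^{(0)}\|_{F}$ by induction on $t$: since $\up$ is $c_{\up}$-Lipschitz with $\up(0)=0$, $|\up(x)|\le c_{\up}|x|$ and hence $\|\up(\mathbf{M})\|_{F}\le c_{\up}\|\mathbf{M}\|_{F}$; since every weight matrix has spectral norm at most $\mu$ and $\|\Anorm\|_{2}\le 1$, the pre-activation at layer $t{+}1$ has Frobenius norm at most $(c_{\rs}+c_{\mpas})\mu\,\|\mathbf{H}^{(t)}\|_{F}$, and applying $\up$ multiplies this by at most $c_{\up}$. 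The bound at level $m$ gives $\|\mathbf{H}^{(m)}-\mathbf{Y}\|_{F}\le\beta^{m}\|\mathbf{H}^{(0)}\|_{F}+\|\mathbf{Y}\|_{F}$; and since $\theta$ enters a single layer multiplied linearly by a $\up'$-weighted (and possibly $\Anorm$-propagated) copy of $\mathbf{H}^{(k-1)}$, one line of differentiation together with the forward bound at level $k-1$ gives $\|\partial\mathbf{H}^{(k)}/\partial\theta\|_{F}\le c_{\up}\max(c_{\rs},c_{\mpas})\,\beta^{k-1}\|\mathbf{H}^{(0)}\|_{F}$.

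Next I would bound the backward Jacobian $\|\partial\mathbf{H}^{(m)}/\partial\mathbf{H}^{(k)}\|_{\mathrm{op}}\le\beta^{m-k}$ by composing per-layer Jacobians. The Jacobian of one layer sends a perturbation $\delta\mathbf{H}$ to $\mathrm{D}^{(t)}\odot\big(c_{\rs}\,\delta\mathbf{H}(\W_{\rs}^{(t-1)})^{\top}+c_{\mpas}\Anorm\,\delta\mathbf{H}(\W_{\mpas}^{(t-1)})^{\top}\big)$, where $\mathrm{D}^{(t)}$ is the entrywise derivative of $\up$ at the pre-activations; the Hadamard multiplication by $\mathrm{D}^{(t)}$ has operator norm at most $c_{\up}$, while $\delta\mathbf{H}\mapsto c_{\rs}\,\delta\mathbf{H}(\W_{\rs}^{(t-1)})^{\top}+c_{\mpas}\Anorm\,\delta\mathbf{H}(\W_{\mpas}^{(t-1)})^{\top}$ has operator norm at most $c_{\rs}\|\W_{\rs}^{(t-1)}\|_{2}+c_{\mpas}\|\Anorm\|_{2}\|\W_{\mpas}^{(t-1)}\|_{2}\le(c_{\rs}+c_{\mpas})\mu$, so each layer contributes at most $\beta$ and the composition of $m-k$ layers at most $\beta^{m-k}$. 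Plugging the three factors into the chain-rule inequality and absorbing everything independent of $m$ into $C:=c_{\up}\max(c_{\rs},c_{\mpas})\,\beta^{k-1}\|\mathbf{H}^{(0)}\|_{F}\,\max(\|\mathbf{H}^{(0)}\|_{F},\|\mathbf{Y}\|_{F})$ yields $|\partial\mathcal{L}/\partial\theta|\le C\,\beta^{m-k}(1+\beta^{m})$, and $\beta<1$ then forces exponential decay in $m$.

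The two inductions above are routine; the only step requiring care is verifying that the effective per-layer operator — the lift of the message-passing matrix $\oper=c_{\rs}\mathbf{I}+c_{\mpas}\Anorm$ composed with the weights and the activation derivative — has operator norm at most $c_{\up}(c_{\rs}+c_{\mpas})\mu$. This is exactly where $\|\Anorm\|_{2}\le 1$ enters (hence the normalized choice $\Anorm=\mathbf{D}^{-1/2}\mathbf{A}\mathbf{D}^{-1/2}$, whose spectral norm equals $1$), and it is why the bound carries $(c_{\up}\mu(c_{\rs}+c_{\mpas}))^{m-k}$ as a single geometric factor rather than a looser product. A minor bookkeeping point: $C$ contains a $k$-dependent power $\beta^{k-1}$, which is harmless since $C$ need only be independent of the depth $m$, and this is precisely why the clean statement displays $\beta^{m-k}$ instead of $\beta^{m-1}$.
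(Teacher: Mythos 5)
Your proposal is correct and follows essentially the same route as the paper's proof: the same three-factor chain-rule decomposition, the same forward bound $\lVert \mathbf{H}^{(t)}\rVert_{F}\le (c_{\up}\mu(c_{\rs}+c_{\mpas}))^{t}\lVert\mathbf{H}^{(0)}\rVert_{F}$ from $\sigma(0)=0$ and Lipschitzness, and the same per-layer backward estimate using $\lVert\Anorm\rVert_{2}\le 1$ (the paper phrases the per-layer operator as the Kronecker product $c_{\rs}\OMEga^{(t)}\otimes\mathbf{I}+c_{\mpas}\W^{(t)}\otimes\Anorm$, which is just your map $\delta\mathbf{H}\mapsto c_{\rs}\delta\mathbf{H}\W_{\rs}^{\top}+c_{\mpas}\Anorm\,\delta\mathbf{H}\W_{\mpas}^{\top}$ in different notation). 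Your explicit bound on $\lVert\partial\mathbf{H}^{(k)}/\partial\theta\rVert_{F}$ is slightly more detailed than the paper, which simply absorbs that $m$-independent factor into $C$.
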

\begin{proof}
Consider a quadratic loss $\mathcal{L}$ of the form 
\begin{equation*}
    \mathcal{L}(\mathbf{H}^{(m)}) = \frac{1}{2}\sum_{v\in\V}\| \mathbf{h}_v^{(m)} - \mathbf{y}_v\|^{2},
\end{equation*}
\noindent and we let $\mathbf{Y}$ represent the node ground-truth values. Given a weight $\theta$ entering layer $k < m$, we can write the gradient of the loss as
\begin{equation*}
    \Big| \frac{\partial \mathcal{L}(\mathbf{H}^{(m)})}{\partial\theta}\Big| = \Big| \sum_{v,u\in\V}\sum_{\alpha,\beta\in[p]}\frac{\partial \mathcal{L}}{\partial h_v^{(m),\alpha}}\frac{\partial h_v^{(m),\alpha}}{\partial h_u^{(k),\beta}}\frac{\partial h_u^{(k),\beta}}{\partial \theta}\Big|.
\end{equation*}
\noindent Once we fix $k$, the term $\lvert \partial h_u^{(k),\beta} / \partial \theta \rvert$ is independent of $m$ and we can bound it by some constant $C$. Since we have a quadratic loss, to bound $\partial \mathcal{L} / \partial h_v^{(m),\alpha}$, it suffices to bound the solution of the $\MPNN$ after $m$ layers. First, we use the Kronecker product formalism to rewrite the $\MPNN$-update  in matricial form as 
\begin{equation}
    \mathbf{H}^{(m)} = \up\left(\left(c_{\rs}\OMEga^{(m)}\otimes \mathbf{I} + c_{\mpas}\W^{(m)}\otimes \Anorm\right)\mathbf{H}^{(m-1)}\right).
\end{equation}
\noindent Thanks to the Lipschitzness of $\up$ and the requirement $\up(0) = 0$, we derive
\begin{equation*}
  \| \mathbf{H}^{(m)} \|_{F} \leq c_{\up}\| c_{\rs}\OMEga^{(m)}\otimes \mathbf{I} + c_{\mpas}\W^{(m)}\otimes \Anorm\|_{2} \|\mathbf{H}^{(m-1)}\|_{F},
\end{equation*}
\noindent where $F$ indicates the Frobenius norm. Since the largest singular value of $\mathbf{B}\otimes \mathbf{C}$ is bounded by the product of the largest singular values, we deduce that -- recall that the largest eigenvalue of $\Anorm = \mathbf{D}^{-1/2}\mathbf{A}\mathbf{D}^{-1/2}$ is $1$:
\begin{equation}\label{app:eq:norm_bounded}
    \| \mathbf{H}^{(m)} \|_{F} \leq c_{\up}\mu(c_{\rs} + c_{\mpas})\|\mathbf{H}^{(m-1)}\|_{F} \leq (c_{\up}\mu(c_{\rs} + c_{\mpas}))^{m}\|\mathbf{H}^{(0)}\|_{F},
\end{equation}
\noindent which affords a control of the gradient of the loss w.r.t. the solution at the final layer being the loss quadratic. We then find
\begin{align}
     \Big| \frac{\partial \mathcal{L}(\mathbf{H}^{(m)})}{\partial\theta}\Big| &\leq C\Big| \sum_{v,u\in\V}\sum_{\alpha,\beta\in[p]}\frac{\partial \mathcal{L}}{\partial h_v^{(m),\alpha}}\frac{\partial h_v^{(m),\alpha}}{\partial h_u^{(k),\beta}}\Big| \notag \\ 
     &\leq C \sum_{v,u\in\V}\sum_{\beta\in[p]}\left\|\frac{\partial \mathcal{L}}{\partial \mathbf{h}_v^{(m)}}\right\|\left\|\frac{\partial \mathbf{h}_v^{(m)}}{\partial h_u^{(k),\beta}}\right\| \notag \\
     &\leq C \sum_{v,u\in\V}\sum_{\beta\in[p]}\Big(\|\mathbf{H}^{(m)}\|_{F} + \|\mathbf{Y}\|_{F}\Big)\left\|\frac{\partial \mathbf{h}_v^{(m)}}{\partial h_u^{(k),\beta}}\right\| \notag \\
     &\leq C \sum_{v,u\in\V}\sum_{\beta\in[p]}\Big((c_{\up}\mu(c_{\rs} + c_{\mpas}))^{m}\|\mathbf{H}^{(0)}\|_{F} + \|\mathbf{Y}\|_{F}\Big)\left\|\frac{\partial \mathbf{h}_v^{(m)}}{\partial h_u^{(k),\beta}}\right\| \label{eq:app:proof:vanishing:1}
\end{align}
\noindent where in the last step we have used Eq.~\eqref{app:eq:norm_bounded}. We now provide a {\bf new} bound on the sensitivity -- {\em differently from the analysis in earlier Sections, here we no longer account for the topological information depending on the choice of $v,u$ given that we need to integrate over all possible pairwise contributions to compute the gradient of the loss.} The idea below, is to apply the Kronecker product formalism to derive a single operator in the tensor product of feature and graph space acting on the Jacobian matrix -- this allows us to derive much sharper bounds. Note that, once we fix a node $u$ and a $\beta\in[p]$, we can write
\begin{align*}
\left\|\frac{\partial \mathbf{H}^{(m)}}{\partial h_u^{(k),\beta}}\right\|^{2} &\leq \sum_{v\in\V}\sum_{\alpha\in [p]}c_{\up}^{2}\Big( c_{\rs}\OMEga^{(m)}_{\alpha\gamma}\frac{\partial h_v^{(m-1),\gamma}}{\partial h_{u}^{(k),\beta}} + c_{\mpas}\W^{(m)}_{\alpha\gamma}\Anorm_{vz}\frac{\partial h_z^{(m-1),\gamma}}{\partial h_{u}^{(k),\beta}}  \Big)^{2} \\
&= c_{\up}^{2}\sum_{v\in\V}\sum_{\alpha\in [p]}\Big(\Big(c_{\rs}\OMEga^{(m)}\otimes \mathbf{I} + c_{\mpas}\W^{(m)}\otimes \Anorm \Big)\frac{\partial \mathbf{H}^{(m-1)}}{\partial h_u^{(k),\beta}}\Big)_{v,\alpha}^{2} \\
&\leq c_{\up}^{2}\| c_{\rs}\OMEga^{(m)}\otimes \mathbf{I} + c_{\mpas}\W^{(m)}\otimes \Anorm \|_{2}^{2} \left\|\frac{\partial \mathbf{H}^{(m-1)}}{\partial h_u^{(k),\beta}}\right\|_{F}^{2} 
\end{align*}
\noindent meaning that 
\begin{equation*}
    \left\|\frac{\partial \mathbf{H}^{(m)}}{\partial h_u^{(k),\beta}}\right\| \leq (c_{\up}\mu(c_{\rs} + c_{\mpas}))^{m-k},
\end{equation*}
\noindent where we have used that (i) the largest singular value of the weight matrices is $\mu$, (ii) that the largest eigenvalue of $c_{\rs}\mathbf{I} + c_{\mpas}\Anorm$ is $c_{\rs} + c_{\mpas}$ (as follows from $\Anorm = \mathbf{I} - \DELta$, and the spectral analysis of $\DELta$), (iii) that $\| \partial\mathbf{H}^{(k)} / \partial h_u^{(k),\beta} \| = 1$. Once we absorb the term $\| \mathbf{Y}\|$ in the constant $C$ in \eqref{eq:app:proof:vanishing:1}, we conclude the proof.

\end{proof}

\section{Proofs of \Cref{sec:topology}}\label{app:sec_topology}
\noindent In this Section we consider the convolutional family of $\MPNN$ in Eq.~\eqref{eq:mpnn_simplified}.
Before we prove the main results of this Section, we comment on the main assumption on the nonlinearity and formulate it more explicitly. Let us take $k < m$. When we compute the sensitivity of $\mathbf{h}_v^{(m)}$ to $\mathbf{h}_u^{(k)}$, we obtain a sum of different terms over all possible paths from $v$ to $u$ of length $m-k$. In this case, the derivative of 
$\mathsf{ReLU}$ acts as a Bernoulli variable evaluated along all these possible paths. Similarly to \citet{kawaguchi2016deep,xu2018representation}, we require the following:


\begin{assumption}\label{assumption} Assume that all paths in the computation graph
of the model are activated with the same probability of
success $\rho$. 
When we take the expectation $\mathbb{E}[\partial \mathbf{h}_{v}^{(m)} / \partial \mathbf{h}_{u}^{(k)}]$, we mean that we are taking the average over such Bernoulli variables. 
\end{assumption}

\noindent Thanks to \Cref{assumption}, we can follow the very same argument in the proof of Theorem 1 in \cite{xu2018representation} to derive 
\begin{equation*}
   \mathbb{E}\Big[\frac{\partial \mathbf{h}_{v}^{(m)}}{\partial\mathbf{h}_{u}^{(k)}}\Big] = \rho\prod_{s = k+1}^{m}\W^{(s)}(\oper^{m-k})_{vu}.
\end{equation*}

\noindent We can now proceed to prove the relation between sensitivity analysis and access time.

\begin{proof}[Proof of \Cref{thm:access}]
Under \Cref{assumption}, we can write the term $\mathbf{J}_k^{(m)}(v,u)$ as 
\begin{align*}
    \mathbb{E}\Big[\mathbf{J}_k^{(m)}(v,u)\Big] &= \mathbb{E}\Big[ \frac{1}{d_v}\frac{\partial \mathbf{h}_{v}^{(m)}}{\partial \mathbf{h}_{v}^{(k)}} - \frac{1}{\sqrt{d_v d_u}}\frac{\partial \mathbf{h}_{v}^{(m)}}{\partial \mathbf{h}_{u}^{(k)}}\Big] \\ 
    &= \rho\prod_{s = k+1}^{m}\W^{(s)}\Big(\frac{1}{d_v}(\oper^{m-k})_{vv} - \frac{1}{\sqrt{d_v d_u}}(\oper^{m-k})_{vu} \Big).
\end{align*}
\noindent 
\noindent Since $\oper = c_{\rs}\mathbf{I} + c_{\mpas}\mathbf{D}^{-1/2}\mathbf{A}\mathbf{D}^{-1/2}$, we can rely on the spectral decomposition of the graph Laplacian -- see the conventions and notations introduced in \Cref{app:sec_preliminaries} -- to write
\begin{equation*}
    \oper = \sum_{\ell = 0}^{n-1}\left(c_{\rs} + c_{\mpas}(1-\lambda_{\ell})\right)\eigen_{\ell}\eigen_{\ell}^\top,
\end{equation*}
\noindent where we recall that $\DELta \eigen_\ell = \lambda_\ell \eigen_\ell$. We can then bound (in {\bf expectation}) the Jacobian obstruction by
\begin{align*}
    \obst^{(m)}(v,u) &= \sum_{k=0}^{m}\| \mathbf{J}^{(m)}_k(v,u) \| \geq \sum_{k = 0}^{m} \rho\nu^{m-k}\Big|\Big(\frac{1}{d_v}(\oper^{m-k})_{vv} - \frac{1}{\sqrt{d_v d_u}}(\oper^{m-k})_{vu} \Big)\Big| \\
    &\geq \rho \Big|\sum_{k = 0}^{m} \nu^{m-k}\Big(\frac{1}{d_v}(\oper^{m-k})_{vv} - \frac{1}{\sqrt{d_v d_u}}(\oper^{m-k})_{vu} \Big)\Big| \\
    &= \rho \Big|\sum_{k = 0}^{m} \nu^{m-k}\sum_{\ell = 0}^{n-1}\Big(c_{\rs} + c_{\mpas}(1-\lambda_{\ell})\Big)^{m-k}\left(\frac{\eigen^{2}_\ell(v)}{d_v} - \frac{\eigen_\ell(v)\eigen_\ell(u)}{\sqrt{d_u d_v}}\right)\Big| \\
    &= \rho\Big| \sum_{\ell = 0}^{n-1}\Big(\sum_{k = 0}^{m} \nu^{m-k}\left(c_{\rs} + c_{\mpas}(1-\lambda_{\ell})\right)^{m-k}\Big)\Big(\frac{\eigen^{2}_\ell(v)}{d_v} - \frac{\eigen_\ell(v)\eigen_\ell(u)}{\sqrt{d_u d_v}}\Big)\Big|\\
    &= \rho\Big|\sum_{\ell = 1}^{n-1}\sum_{k = 0}^{m} \left(\nu(c_{\rs} + c_{\mpas}(1-\lambda_{\ell}))\right)^{m-k}\Big(\frac{\eigen^{2}_\ell(v)}{d_v} - \frac{\eigen_\ell(v)\eigen_\ell(u)}{\sqrt{d_u d_v}}\Big)\Big|,
\end{align*}
\noindent where in the last equality we have used that $\eigen_0(v) = \sqrt{d_v}/(2\lvert \E\rvert)$ for each $v\in\V$. We can then expand the geometric sum by using our assumption $\nu(c_{\rs} + c_{\mpas}) = 1$ and write
\begin{equation*}
\obst^{(m)}(v,u) \geq \rho\Big|\sum_{\ell = 1}^{n-1}\frac{1 - \left(\nu(c_{\rs} + c_{\mpas}(1-\lambda_\ell))\right)^{m+1}}{1 - \nu(c_{\rs} + c_{\mpas}) +\nu c_{\mpas}\lambda_\ell}\Big(\frac{\eigen^{2}_\ell(v)}{d_v} - \frac{\eigen_\ell(v)\eigen_\ell(u)}{\sqrt{d_u d_v}}\Big)\Big|. 
\end{equation*}
\noindent Since $\nu(c_{\rs} + c_{\mpas}) = 1$, we can simplify the lower bound as
\begin{equation*}
\obst^{(m)}(v,u) \geq \rho\Big|\sum_{\ell = 1}^{n-1}\frac{1}{\nu c_{\mpas}\lambda_\ell} \Big(\frac{\eigen^{2}_\ell(v)}{d_v} - \frac{\eigen_\ell(v)\eigen_\ell(u)}{\sqrt{d_u d_v}}\Big)\Big| - \rho\Big| \sum_{\ell = 1}^{n-1}\frac{\left(\nu(c_{\rs} + c_{\mpas}(1-\lambda_\ell))\right)^{m+1}}{\nu c_{\mpas}\lambda_\ell}\Big(\frac{\eigen^{2}_\ell(v)}{d_v} - \frac{\eigen_\ell(v)\eigen_\ell(u)}{\sqrt{d_u d_v}}\Big) \Big|. 
\end{equation*}
\noindent By \citet[Theorem 3.1]{lovasz1993random}, the first term is equal to $\lvert (\nu c_{\mpas})^{-1}\mathsf{t}(u,v)/2\lvert\E\rvert \rvert$ which is a positive number. Concerning the second term, we recall that the eigenvalues of the graph Laplacian are ordered from smallest to largest and that $\eigen_\ell$ is a unit vector, so 
\begin{equation*}
\obst^{(m)}(v,u) \geq \frac{\rho}{\nu c_{\mpas}}\frac{\mathsf{t}(u,v)}{2\lvert \E\rvert} - \frac{\rho(1 - \nu c_{\mpas}\lambda^\ast)^{m+1}}{\nu c_{\mpas}\lambda_1}\frac{n-1}{d_{\mathrm{min}}},
\end{equation*}
\noindent with $\lambda^\ast$ such that $\lvert 1 - \lambda^\ast \rvert = \max_{\ell > 0}\lvert 1 - \lambda_\ell \rvert$ which completes the proof.

\end{proof}
\begin{proof}[Proof of \Cref{thm:effective_resistance}]
We follow the same strategy used in the proof of \Cref{thm:access}. Under \Cref{assumption}, we can write the term $\mathbf{J}_k^{(m)}(v,u)$ as 
\begin{align*}
    \mathbb{E}\Big[\mathbf{J}_k^{(m)}(v,u)\Big] &= \mathbb{E}\Big[ \frac{1}{d_v}\frac{\partial \mathbf{h}_{v}^{(m)}}{\partial \mathbf{h}_{v}^{(k)}} - \frac{1}{\sqrt{d_v d_u}}\frac{\partial \mathbf{h}_{v}^{(m)}}{\partial \mathbf{h}_{u}^{(k)}} + \frac{1}{d_u}\frac{\partial \mathbf{h}_{u}^{(m)}}{\partial \mathbf{h}_{u}^{(k)}}  - \frac{1}{\sqrt{d_v d_u}}\frac{\partial \mathbf{h}_{u}^{(m)}}{\partial \mathbf{h}_{v}^{(k)}}\Big] \\
    &= \rho\prod_{s = k+1}^{m}\W^{(s)}\Big(\frac{1}{d_v}(\oper^{m-k})_{vv} + \frac{1}{d_u}(\oper^{m-k})_{uu} - 2(\oper^{m-k})_{vu} \Big)
\end{align*}
\noindent where we have used the symmetry of $\oper$. We note that the term within brackets can be equivalently reformulated as
\begin{equation*}
    \frac{1}{d_v}(\oper^{m-k})_{vv} + \frac{1}{d_u}(\oper^{m-k})_{uu} - 2(\oper^{m-k})_{vu} = \langle \frac{\mathbf{e}_v}{\sqrt{d_v}} - \frac{\mathbf{e}_u}{\sqrt{d_u}}, \oper^{m-k}\Big(\frac{\mathbf{e}_v}{\sqrt{d_v}} - \frac{\mathbf{e}_u}{\sqrt{d_u}}\Big)\rangle
\end{equation*}
\noindent where $\mathbf{e}_v$ is the vector with $1$ at entry $v$, and zero otherwise. In particular,
we note an {\bf important fact}: since, by assumption, $c_{\rs} \geq c_{\mpas}$ and $\lambda_{n-1} < 2$, whenever $\gph$ is not bipartite, we derive that $\oper$ is a {\em positive definite operator}. We can then bound (in {\bf expectation}) the Jacobian obstruction by
\begin{align*}
    \tilde{\obst}^{(m)}(v,u) &= \sum_{k=0}^{m}\| \mathbf{J}^{(m)}_k(v,u) \| \leq \sum_{k = 0}^{m} \rho\mu^{m-k}\sum_{\ell = 0}^{n-1}\left(c_{\rs} + c_{\mpas}(1-\lambda_{\ell})\right)^{m-k}\Big(\frac{\eigen_\ell(v)}{\sqrt{d_v}} - \frac{\eigen_\ell(u)}{\sqrt{d_u}}\Big)^{2} \\
    &= \rho\sum_{\ell = 0}^{n-1}\left(\sum_{k = 0}^{m} \mu^{m-k}\left(c_{\rs} + c_{\mpas}(1-\lambda_{\ell})\right)^{m-k}\right)\Big(\frac{\eigen_\ell(v)}{\sqrt{d_v}} - \frac{\eigen_\ell(u)}{\sqrt{d_u}}\Big)^{2} \\
    &= \rho\sum_{\ell = 1}^{n-1}\left(\sum_{k = 0}^{m} \mu^{m-k}\left(c_{\rs} + c_{\mpas}(1-\lambda_{\ell})\right)^{m-k}\right)\Big(\frac{\eigen_\ell(v)}{\sqrt{d_v}} - \frac{\eigen_\ell(u)}{\sqrt{d_u}}\Big)^{2},
\end{align*}
\noindent where in the last equality we have used that $\eigen_0(v) = \sqrt{d_v}/(2\lvert \E\rvert)$. We can then expand the geometric sum by using our assumption $\mu(c_{\rs} + c_{\mpas})\leq 1$ and write
\begin{align*}
\tilde{\obst}^{(m)}(v,u) &\leq \rho\sum_{\ell = 1}^{n-1}\frac{1 - \left(\mu(c_{\rs} + c_{\mpas}(1-\lambda_\ell))\right)^{m+1}}{1 - \mu(c_{\rs} + c_{\mpas}) +\mu c_{\mpas}\lambda_\ell}\Big(\frac{\eigen_\ell(v)}{\sqrt{d_v}} - \frac{\eigen_\ell(u)}{\sqrt{d_u}}\Big)^{2} \\
&\leq \sum_{\ell = 1}^{n-1}\frac{\rho}{\mu c_{\mpas}\lambda_\ell}\Big(\frac{\eigen_\ell(v)}{\sqrt{d_v}} - \frac{\eigen_\ell(u)}{\sqrt{d_u}}\Big)^{2} \\
& = \frac{\rho}{\mu c_{\mpas}}\sum_{\ell = 1}^{n-1}\frac{1}{\lambda_\ell}\Big(\frac{\eigen_\ell(v)}{\sqrt{d_v}} - \frac{\eigen_\ell(u)}{\sqrt{d_u}}\Big)^{2} \\
& = \frac{\rho}{\mu c_{\mpas}}\res(v,u)
\end{align*} 
\noindent where in the last step we used the spectral characterization of the effective resistance derived in \citet{lovasz1993random} -- which was also leveraged in \citet{arnaiz2022diffwire} to derive a novel rewiring algorithm. Since by \citet{chandra1996electrical} we have $2\res(v,u)\lvert\E\rvert = \tau(v,u)$, this completes the proof of the upper bound. The lower bound case follows by a similar argument. In fact, one arrives at the estimate
\begin{align*}
\tilde{\obst}^{(m)}(v,u) &\geq \rho\sum_{\ell = 1}^{n-1}\frac{1 - \left(\nu(c_{\rs} + c_{\mpas}(1-\lambda_\ell))\right)^{m+1}}{1 - \nu(c_{\rs} + c_{\mpas}) +\nu c_{\mpas}\lambda_\ell}\Big(\frac{\eigen_\ell(v)}{\sqrt{d_v}} - \frac{\eigen_\ell(u)}{\sqrt{d_u}}\Big)^{2}. 
\end{align*} 
\noindent  We derive
 \[
 1 - \left(\nu(c_{\rs} + c_{\mpas}(1-\lambda_\ell))\right)^{m+1} \geq 1 - \left(\nu(c_{\rs} + c_{\mpas}(1-\lambda^\ast))\right)^{m+1},
 \]
 \noindent where $\lvert 1 -\lambda^\ast\rvert = \max_{\ell > 0} \lvert 1 - \lambda^\ast\rvert$.
 \noindent Next, we also find that 
 \[
 \frac{1}{1 - \nu(c_{\rs} + c_{\mpas}) +\nu c_{\mpas}\lambda_\ell} \geq \frac{\epsilon}{\nu c_{\mpas}\lambda_\ell} \Longleftrightarrow \lambda_\ell \geq \frac{\epsilon}{1-\epsilon}\, \frac{1-\nu (c_{\rs} + c_{\mpas})}{\nu c_{\mpas}}. 
 \]
 \noindent Since the eigenvalues are ordered from smallest to largest, it suffices that 
  \[
 \lambda_1 \geq \frac{\epsilon}{1-\epsilon}\, \frac{1-\nu (c_{\rs} + c_{\mpas})}{\nu c_{\mpas}} \Longleftrightarrow \epsilon \leq \epsilon_\gph := \frac{\lambda_1}{\lambda_1 + \frac{1 - \nu(c_{\rs}+c_{\mpas})}{\nu c_{\mpas}}}.
 \]
 \noindent This completes the proof.
\end{proof}

\noindent {\em We emphasize that without the degree normalization, the bound would have an extra-term (potentially diverging with the number of layers) and simply proportional to the degrees of nodes $v,u$. The extra-degree normalization is off-setting this uninteresting contribution given by the steady state of the Random Walks.}

\section{Graph Transfer}
\label{app:graph-transfer}

The goal in the three graph transfer tasks - $\mathsf{Ring}$, $\mathsf{CrossedRing}$, and $\mathsf{CliquePath}$ - is for the $\MPNN$ to `transfer' the features contained at the target node to the source node. $\mathsf{Ring}$ graphs are cycles of size $n$, in which the target and source nodes are placed at a distance of $\lfloor n / 2 \rfloor$ from each other. $\mathsf{CrossedRing}$ graphs are also cycles of size $n$, but now include `crosses' between the auxiliary nodes. Importantly, the added edges do not reduce the minimum distance between the source and target nodes, which remains $\lfloor n / 2 \rfloor$. $\mathsf{CliquePath}$ graphs contain a  $\lfloor n / 2  \rfloor$-clique and a path of length $\lfloor n / 2  \rfloor$. The source node is placed on the clique and the target node is placed at the end of the path. The clique and path are connected in such a way that the distance between the source and target nodes is $\lfloor n / 2  \rfloor + 1$, in other words the source node requires one hop to gain access the path.

\begin{figure}[!htbp]
    \centering
    \includegraphics[width=\textwidth]{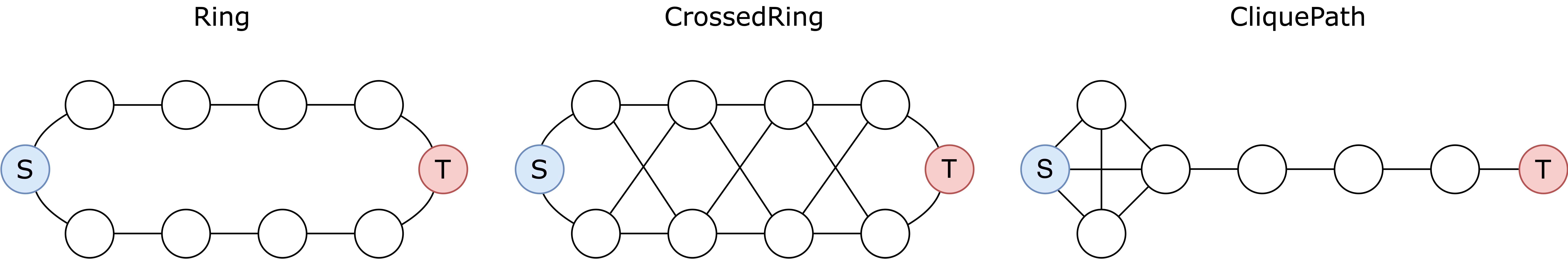}
    \caption{Topological structure of $\mathsf{RingTransfer}$, $\mathsf{CrossedRingTransfer}$, and $\mathsf{CliquePath}$. The nodes marked with an $S$ are the source nodes, while the nodes with a $T$ are the target nodes. All tasks are shown for a distance between the source and target nodes of $r=5$.}
    \label{fig:graph-transfer-example}
\end{figure}

Figure \ref{fig:graph-transfer-example} shows examples of the graphs contained in the $\mathsf{Ring}$, $\mathsf{CrossedRing}$, and $\mathsf{CliquePath}$ tasks, for when the distance between the source and target nodes is $r=5$. In our experiments we take as input dimension $p=5$ and assign to the target node a randomly one-hot encoded feature vector - for this reason the random guessing baseline obtains $20\%$ accuracy. The source node is assigned a vector of all $0s$ and the auxiliary nodes are instead assigned vectors of $1s$. Following \cite{bodnar2021weisfeilercell}, we generate $5000$ graphs for the training set and $500$ graphs for the test set for each task. In our experiments, we report the mean accuracy over the test set. We train for $100$ epochs, with depth of the $\MPNN$ equal to the distance between the source and target nodes $r$. Unless specified otherwise, we set the hidden dimension to $64$. During training and testing, we apply a mask over all the nodes in order to focus only on the source node to compute losses and accuracy scores.


\section{Signal Propagation}
\label{app:signal-prop}

\begin{figure}[t]
    \centering
    \includegraphics[width=.75\textwidth]{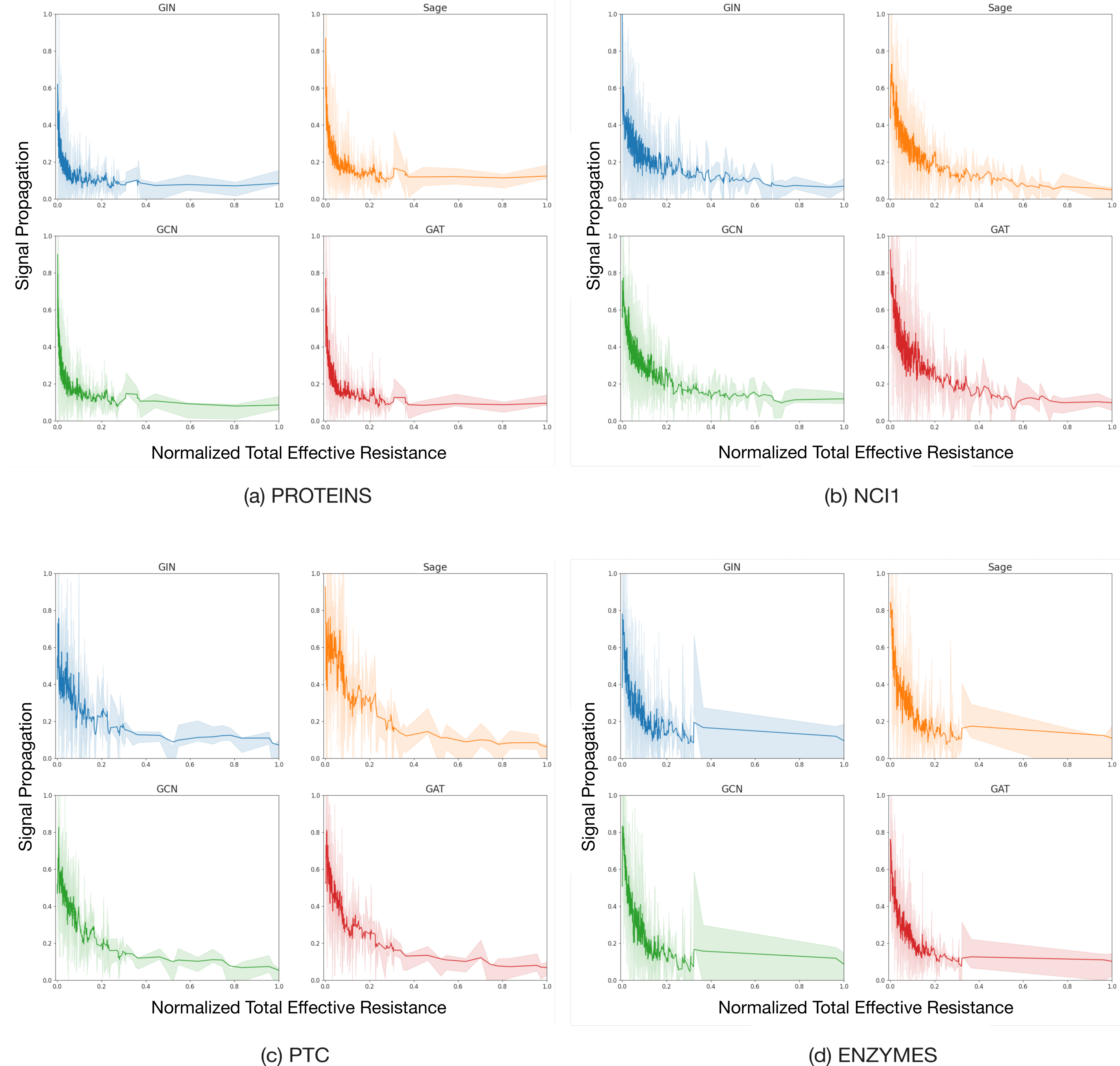}
    \caption{Decay of the amount of information propagated through the graphs w.r.t. the normalized total effective resistance (commute time) for: (a) $\mathsf{PROTEINS}$; (b) $\mathsf{NCI1}$; (c) $\mathsf{PTC}$; (d) $\mathsf{ENZYMES}$. For each dataset we report the decay for: (i) $\mathsf{GIN}$ (top-left); (ii) $\mathsf{Sage}$ (top-right), (iii) $\mathsf{GCN}$  (bottom-left) and (iv)
    $\mathsf{GAT}$(bottom-right).}
    \label{fig:signal-diffusion-main}
\end{figure}

In this section we provide synthetic experiments on the $\mathsf{PROTEINS}$, $\mathsf{NCI1}$, $\mathsf{PTC}$, $\mathsf{ENZYMES}$ datasets with the aim to provide empirical evidence to the fact that the total effective resistance of a graph, $\res_{\gph} = \sum_{v,u} \res{(v,u)}$ \cite{ellens2011effective}, is related to the ease of information propagation in an $\MPNN$. The experiment is designed as follows: we first fix a source node $v\in\V$ assigning it a $p$-dimensional unitary feature vector, and assigning the rest of the nodes zero-vectors. We then 
consider the quantity 
\begin{equation*}
h^{(m)}_\odot = \frac{1}{p \max_{u\neq v} d_\gph (v,u)}  \sum_{f=1}^{p}\sum_{u \neq v}\frac{h_u^{(m), f}}{\| h_u^{(m),f} \|} d_\gph(v,u),
\end{equation*}

to be the amount of signal (or `information') that has been propagated through $\gph$ by an $\MPNN$ with $m$ layers. Intuitively, we measure the (normalized) propagation distance over $\gph$, and average it over all the $p$ output channels. By propagation distance we mean the average distance to which the initial `unit mass' has been propagated to - a larger propagation distance means that on average the unit mass has travelled further w.r.t. to the source node. The goal is to show that $h^{(m)}_\odot$ is {\em inversely proportional to $\res_{\gph}$}. In other words, we expect graphs with \emph{lower} total effective resistance to have a \emph{larger} propagation distance. The experiment is repeated for each graph $\gph$ that belongs to the dataset $\mathcal{D}$. We start by randomly choosing the source node $v$, we then set $\mathbf{h}_v$ to be an arbitrary feature vector with unitary mass (i.e. $\| \mathbf{h}_v \|_{L_1} = 1$) and assigning the zero-vector to all other nodes (i.e. $\mathbf{h}_u = \boldsymbol{0}, \; u \neq v$). We use $\MPNN$s with
a number of layers $m$ close to the average diameter of the graphs in the dataset, input and hidden dimensions $p=5$ and ReLU activations. In particular, we estimate the resistance of $\gph$  by sampling $10$ nodes with uniform probability for each graph and we report $h^{(m)}_\odot$ accordingly.  In Figure \ref{fig:signal-diffusion-main} we show that {\em $\MPNN$s are able to propagate information further when the effective resistance is low},  validating empirically the impact of the graph topology on over-squashing phenomena. It is worth to emphasize that in this experiment, the parameters of the $\MPNN$ are randomly initialized and there is no underlying training task. This implies that in this setup we are isolating the problem of propagating the signal throughout the graph, separating it from vanishing gradient phenomenon.

\end{document}